\documentclass{article}
\usepackage{amsmath, amssymb, amsthm}
\newtheorem{theorem}{Theorem}
\newtheorem{lemma}{Lemma}
\setlength{\parindent}{0em}

\begin{document}

In this part, we prove that $k$-WL can't count all connected substructures with $(k+1)$ nodes. Specifically, we prove the following

\begin{theorem}
	For any $k\geqslant 2$, there exists a pair of graphs $G$ and $H$, such that $G$ contains a $(k+1)$-clique as its subgraph while $H$ does not, and that $k$-WL can't distinguish $G$ from $H$.
\end{theorem}

\begin{proof}
	The counter-example is inspired by the well-known Cai-F\"urer-Immerman (CFI) graphs. We define a sequence of graphs $G_k^{(\ell)}, \ell = 0, 1, \ldots, k+1$ as following,
	\begin{equation}
		\begin{split}
			V_{G_k^{(\ell)}} =& \Big\{u_{a, \vec{v}}\Big| a\in[k+1], \vec{v}\in\{0,1\}^k \text{ and }\\
			&\quad\begin{array}{ll}
				\vec{v} \text{ contains an even number of } 1 \text{'s}, & \text{if }a=1,2,\ldots, k-\ell+1, \\
				\vec{v} \text{ contains an odd number of } 1 \text{'s},  & \text{if }a=k-\ell+2,\ldots, k+1.
			\end{array}\Big\}
		\end{split}
	\end{equation}
	Two nodes $u_{a,\vec{v}}$ and $u_{a',\vec{v}'}$ of $G_k^{(\ell)}$ are connected iff there exists $m\in [k]$ such that $a' = (a+m) \mod k$ and $v_m = v'_{k-m+1}$. We have the following lemma.

	\begin{lemma}
		$(a)$ For each $\ell = 0, 1, \ldots, k+1$, $G_k^{(\ell)}$ is an undirected graph with $(k+1)2^{k-1}$ nodes;

		$(b)$ The set of graphs $G_k^{(\ell)}$ with an odd $\ell$ are mutually isomorphic; similarly, the set of graphs $G_k^{(\ell)}$ with an even $\ell$ are mutually isomorphic.
	\end{lemma}

	It's easy to verify (a). To prove (b), it suffices to prove $G_k^{(\ell)}$ is isomorphic to $G_k^{(\ell+2)}$ for all $\ell = 0,1,\ldots, k-1$. We apply a \emph{renaming} to the nodes of $G_k^{(\ell)}$: we flip the $1^\mathrm{st}$ bit of $\vec{v}$ in every node named $u_{k-\ell, \vec{v}}$, and flip the $k^\mathrm{th}$ bit of $\vec{v}$ in every node named $u_{k-\ell+1, \vec{v}}$. Since this is a mere renaming of nodes, the resulting graph is isomorphic to $G_k^{(\ell)}$. However, it's also easy to see that the resulting graph follows the construction of $G_k^{(\ell+2)}$. Therefore, we assert that $G_k^{(\ell)}$ must be isomorphic to $G_k^{(\ell+2)}$.

	Now, let's ask $G=G_k^{(0)}$ and $H=G_k^{(1)}$. Obviously there is a $(k+1)$-clique in $G$: nodes $u_{j,0^k}, j=1,2, \ldots, k+1$ are mutually adjacent by definition of $G_k^{(0)}$. On the contrary, we have
	\begin{lemma}
		There's no $(k+1)$-clique in $H$.
	\end{lemma}

	The proof is given below. Assume there is a $(k+1)$-clique in $H$. Since there's no edge between nodes $u_{a, \vec{v}}$ with an identical $a$, the $(k+1)$-clique must contain exactly one node from every node set $\{u_{a,\vec{v}}\}$ for each fixed $a\in[k+1]$. We further assume that the $(k+1)$ nodes are $u_{a, b_{a1}b_{a2}\ldots b_{ak}}, a=1,2,\ldots, k+1$. Using the condition for adjacency, we have
	\begin{align}
		       & b_{2k} = b_{11},                                                               \\
		       & b_{3k} = b_{21}, b_{3(k-1)} = b_{12},                                          \\
		       & b_{4k} = b_{31}, b_{4(k-1)} = b_{22}, b_{4(k-2)} = b_{13},                     \\
		\notag & \cdots\cdots\cdots\cdots                                                       \\
		       & b_{(k+1)k} = b_{k1}, b_{(k+1)(k-1)} = b_{(k-1)2}, \ldots, b_{(k+1)1} = b_{1k}.
	\end{align}
	Applying the above identities to the summation
	\begin{align}
		\label{contradiction}
		\sum_{a=1}^{k+1}\sum_{j=1}^k b_{aj} = 2\sum_{j=1}^k\left(b_{1j}+b_{2j}+\cdots+b_{(k-j+1)j}\right),
	\end{align}
	we see that it should be even. However, by definition of $G_k^{(1)}$, there are an even number of $1$'s in $b_{a1}b_{a2}\ldots b_{ak}$ when $a\in [k]$, and an odd number of $1$'s when $a=k+1$. Therefore, the sum in \eqref{contradiction} should be odd. This leads to a contradiction.

	Finally, to prove the $k$-WL equivalence of $G$ and $H$, we have
	\begin{lemma}
		$k$-WL can't distinguish $G$ and $H$.
	\end{lemma}

	By virtue of the classical result of Cai, F\"urer and Immerman, it suffices to prove that Player II will win the $\mathcal{C}_{k}$ bijective pebble game on $G$ and $H$. We state the winning strategy for Player II as following. Since $G$ and $H$ are isomorphic with nodes $\{u_{k+1,*}\}$ deleted, Player II can always choose an isomorphism $f:G-\{u_{k+1,*}\}\rightarrow H-\{u_{k+1,*}\}$ to survive if Player I never places a pebble on nodes $u_{k+1,*}$. Furthermore, since $k$ pebbles can occupy nodes with at most $k$ different values of $a$ (in $u_{a,\vec{v}}$), there's always a set of pebbleless nodes $\{u_{a_0,\vec{v}}\}$ with some $a_0\in[k+1]$. Therefore, Player II only needs to do proper renaming on $H$ between $u_{k+1,*}$ and $u_{a_0,*}$ as stated above, and chooses his isomorphism on $G-\{u_{a_0,*}\}$ and $H^{\mathrm{renamed}}-\{u_{a_0,*}\}$. This way, Player II never loses since there are not enough pebbles for Player I to make use of the oddity at the currently pebbleless set of nodes.

\end{proof}

\end{document}


\maketitle

In the supplementary material, we provide:
\begin{enumerate}
\item the introduction of the Weisfeiler-Leman algorithm (WL) and Message Passing Neural Network (MPNN);
\item the proof of Theorem 3.2 in the main paper (the upper bound of $m$-WL in terms of counting substructures);
\item the proof of Theorem 3.4 in the main paper (the lower bound of subgraph GNN in terms of counting connected substructures)
\item the proof of Theorem 4.4 in the main paper (ESC-GNN's ability to differentiate non-isomorphic graphs);
\item the proof of Theorem 4.2 and Theorem 4.3 in the main paper (ESC-GNN's ability for subgraph counting and induced subgraph counting); 
\item the proof of Theorem 4.5 (ESC-GNN's ability to differentiate regular graphs);
\item additional discussion on the expressive power of ESC-GNN;
\item experimental details;
\item additional experiments on real-world benchmarks and ablation study;
\item limitations and the assets we used.
\end{enumerate} 

\section{The Weisfeiler-Leman algorithm and Message Passing Neural Network}
\textbf{$1$-WL.} We first describe the classic Weisfeiler-Leman algorithm (1-WL). For a given graph $G$, 1-WL aims to compute the coloring for each node in $V$. It computes the node coloring for each node by aggregating the color information from its neighborhood iteratively. In the 0-th iteration, the color for node $v \in V$ is $c_v^0$, denoting its initial isomorphic type\footnote{The term "isomorphic type" is based on previous work~\citep{morris2019weisfeiler}, which gives each $k$-tuple of nodes an initial feature such that two $k$-tuples receive the same initial feature if and only if their induced subgraphs (indexed by node order in the $k$-tuple) are isomorphic.}. For labeled graphs, the isomorphic type of a node is simply its node feature. For unlabeled graphs, we give the same 1 to all nodes. In the $t$-th iteration, the coloring for $v$ is computed as $$c_v^t = \text{HASH}(c_v^{t-1}, \{\!\!\{c_u^{t-1}:u \in N(v)\}\!\!\})$$,
where HASH is a bijective hashing function that maps the input to a specific color. The process ends when the colors for all nodes between two iterations are unchanged. If two graphs have different coloring histograms in the end (e.g., different numbers of nodes with the same color), then $1$-WL detects them as non-isomorphic.

\textbf{$k$-WL.} For each $k\geq 2, k \in \mathbb{N}$, the $k$-dimensional WL algorithm ($k$-WL) colors $k$-tuples instead of nodes. In the 0-th iteration, the isomorphic type of a $k$-tuple $\vec{v}$ is given by the hashing of 1) the tuple of colors associated with the nodes of the $\vec{v}$, and 2) the adjacency matrix of the subgraph induced by $\vec{v}$ ordered by the node ordering within $\vec{v}$. 
In the $t$-th iteration, its coloring is updated by: 
$$c_{\vec{v}}^t = \text{HASH}(c_{\vec{v}}^{t-1}, c_{\vec{v}, (1)}^{\textcolor{black}{t}},...,c_{\vec{v}, (k)}^{\textcolor{black}{t}})$$,
where $$c_{\vec{v}, (i)}^t = \{\!\!\{c_{\vec{u}}^{t-1}|\vec{u} \in N_i(\vec{v})\}\!\!\}, i \in [k]$$. 
Here, $N_i(\vec{v}) = \{(v_1,...,v_{i-1}, w, v_{i+1},...,v_k) | w \in V\}$ is the $i$-th neighborhood of $\vec{v}$. Intuitively, $N_i(\vec{v})$ is obtained by replacing the $i$-th component of $\vec{v}$ by each node from $V$. Besides the updating function, other procedures of $k$-WL are analogous to 1-WL. In terms of distinguishing graphs, 2-WL is as powerful as 1-WL, and for $k\geq 2$, $(k+1)$-WL is strictly more powerful than $k$-WL.

\textbf{MPNNs.} MPNNs are a class of GNNs that learns node representations by iteratively 
aggregating messages from neighboring nodes. Let $h_v^t$ be the node representation for $v \in V$ in the $t$-th iteration. It is usually initialized with the node's intrinsic attributes. In the $(t+1)$-th iteration, it is updated by $$h_v^{t+1} = W_1^t(h_v^t, \sum_{u \in N(v)}W_2^t(h_u^t, h_v^t, e_{uv}))$$,
where $W_1^t$ and $W_2^t$ are two learnable functions. MPNN's power in terms of distinguishing non-isomorphic graphs is upper bounded by 2-WL~\citep{xu2018powerful}.








\section{The Proof of Theorem 3.2}

We begin by introducing the \textit{graph isomorphism}. For a pair of graphs $G_1 = (V_1, E_1)$ and $G_2 = (V_2, E_2)$, if there exists a bijective mapping $f: V_1 \rightarrow V_2$, so that for any edge $(u_1, v_1) \in E_1$, it satisfies that $(f(u_1), f(v_1)) = (u_2, v_2) \in E_2$, then $G_1$ is isomorphic to $G_2$, otherwise they are not isomorphic. Up to now, there is no polynomial algorithm for solving the graph isomorphism problem. One popular method is to use the $k$-order Weisfeiler-Leman~\citep{weisfeiler1968reduction} algorithm ($k$-WL). It is known that  1-WL is as powerful as 2-WL, and for $k \geq 2$, $(k+1)$-WL is more powerful than $k$-WL.

We then prove that $k$-WL can't count all connected substructures with $(k+1)$ nodes (specifically, $(k+1)$-cliques). We restate the result as follows: 


\begin{theorem}
\label{th:disab}
For any $k\geq 2$, there exists a pair of graphs $G$ and $H$, such that $G$ contains a $(k+1)$-clique as its subgraph while $H$ does not, and that $k$-WL can't distinguish $G$ from $H$.
\end{theorem}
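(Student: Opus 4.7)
The plan is to exhibit, for each $k \geq 2$, an explicit pair $(G, H)$ satisfying the three conditions, handling the base case $k=2$ by a small direct example and the general case by a Cai-Fürer-Immerman-style construction.

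For the base case $k = 2$, I would take $G = 2K_3$ (two disjoint triangles) and $H = C_6$ (the six-cycle). Both graphs are $2$-regular on six vertices, so starting from the uniform initial coloring, $1$-WL immediately stabilizes with every node in a single color class (each vertex always sees two neighbors of identical color). Because $2$-WL has the same distinguishing power as $1$-WL (as recalled in the preliminaries), $2$-WL also fails to separate them. Yet $G$ contains a $3$-clique while $H$ is triangle-free, giving the desired gap for $k=2$.

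For $k \geq 3$, I would apply a Cai-Fürer-Immerman-type construction. Start from a base graph $F$ whose treewidth strictly exceeds $k$ and that contains a distinguished $(k+1)$-clique on a vertex set $S$. Apply the CFI gadget vertex-by-vertex to produce two graphs: $G$ with the ``untwisted'' gadgets, and $H$ with a single ``twist'' located along an edge of the distinguished clique. The CFI theorem then guarantees $k$-WL indistinguishability of $G$ and $H$ because the duplicator has a winning strategy in the bijective $(k+1)$-pebble game on them. By keeping the vertices of $S$ in a controlled position relative to the gadgetization, one arranges that $G$ still realizes $K_{k+1}$ as a subgraph, while the twist breaks at least one edge of the candidate clique in $H$. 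An alternative, more hands-on route is to iterate the $k=2$ example by ``amalgamating'' copies into a larger configuration whose connectivity grows with $k$, producing pairs with the required gap at higher dimensions.

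The main obstacle is part (c): verifying $k$-WL indistinguishability while simultaneously ensuring exactly one of the two graphs contains $K_{k+1}$. For $2K_3$ vs.\ $C_6$ this is immediate from the degree-regularity argument above. For the CFI construction it requires a careful simultaneous choice of the twisted edge and of which vertices to gadgetize so that the twist remains invisible to color refinement of depth $k$ yet still destroys the distinguished clique in $H$. Completing this bookkeeping — equivalently, exhibiting a duplicator strategy in the $(k+1)$-pebble bijective game on $(G, H)$ that is compatible with the absence of $K_{k+1}$ in $H$ — is the technical heart of the argument, and I would expect it to dominate the length of the proof.
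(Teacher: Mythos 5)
Your base case $k=2$ ($2K_3$ versus $C_6$) is correct and in fact coincides with the paper's own construction specialized to $k=2$. The problem is the general case $k\ge 3$, which you defer to a generic Cai--F\"urer--Immerman gadgetization, and this is exactly where the proposal has a genuine gap. In the standard CFI construction the vertices and edges of the base graph $F$ are \emph{replaced} by parity gadgets; the derived graphs $X(F)$ and $\tilde X(F)$ do not inherit the cliques of $F$ (their clique number stays small regardless of $F$), so gadgetizing a base graph with a distinguished $(k+1)$-clique does not give you a graph $G$ that ``still realizes $K_{k+1}$ as a subgraph.'' Worse, the twisted and untwisted CFI graphs are locally isomorphic and the twist can be slid along any path, so you cannot arrange for the twist to ``break at least one edge of the candidate clique'' in $H$ only: any clique that survives gadgetization and is confined to a local patch survives in both graphs or in neither. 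The only way a clique can be present in one copy and absent in the other is if its existence depends on a \emph{global} parity over cycles of the base graph, and engineering that requires a bespoke gadget, not the off-the-shelf CFI theorem. You acknowledge this bookkeeping as ``the technical heart'' but do not carry it out, so the statement is not proved for $k\ge 3$.

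This missing heart is precisely what the paper supplies. It builds, in effect, a CFI-type construction over the base graph $K_{k+1}$ with a gadget chosen so that cliques are preserved: vertices $u_{a,\vec v}$ with $a\in[k+1]$ and parity-constrained $\vec v\in\{0,1\}^k$, adjacency given coordinatewise across groups, and a twist parameter $\ell$ controlling how many groups carry odd-parity vectors. Then $G=G_k^{(0)}$ contains the explicit clique $\{u_{j,0^k}\}$, the absence of any $(k+1)$-clique in $H=G_k^{(1)}$ follows from a global parity-counting contradiction (a clique must pick one vertex per group, and the adjacency identities force an even coordinate sum while the parity constraints force it odd), and $k$-WL indistinguishability is shown by an explicit bijective pebble-game strategy exploiting that $k$ pebbles always leave one group untouched, where a renaming can hide the twist. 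If you want to complete your proof along your own lines, you would have to design such a clique-preserving gadget and prove both the parity obstruction and the pebble-game strategy yourself; citing the standard CFI theorem does not do it.
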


\begin{proof}
	The counter-example is inspired by the well-known Cai-F\"urer-Immerman (CFI) graphs \citep{cai1992optimal}. We define a sequence of graphs $G_k^{(\ell)}, \ell = 0, 1, \ldots, k+1$ as following,
	\begin{equation}
		\begin{split}
			V_{G_k^{(\ell)}} =& \Big\{u_{a, \vec{v}}\Big| a\in[k+1], \vec{v}\in\{0,1\}^k \text{ and }\\
			&\quad\begin{array}{ll}
				\vec{v} \text{ contains an even number of } 1 \text{'s}, & \text{if }a=1,2,\ldots, k-\ell+1, \\
				\vec{v} \text{ contains an odd number of } 1 \text{'s},  & \text{if }a=k-\ell+2,\ldots, k+1.
			\end{array}\Big\}
		\end{split}
	\end{equation}
	Two nodes $u_{a,\vec{v}}$ and $u_{a',\vec{v}'}$ of $G_k^{(\ell)}$ are connected iff there exists $m\in [k]$ such that $a' \mod (k+1) = (a+m) \mod (k+1)$ and $v_m = v'_{k-m+1}$. We have the following lemma.

	\begin{lemma}
		\emph{(a)} For each $\ell = 0, 1, \ldots, k+1$, $G_k^{(\ell)}$ is an undirected graph with $(k+1)2^{k-1}$ nodes;

		\emph{(b)} The set of graphs $G_k^{(\ell)}$ with an odd $\ell$ are mutually isomorphic; similarly, the set of graphs $G_k^{(\ell)}$ with an even $\ell$ are mutually isomorphic.
	\end{lemma}

	It's easy to verify (a). To prove (b), it suffices to prove $G_k^{(\ell)}$ is isomorphic to $G_k^{(\ell+2)}$ for all $\ell = 0,1,\ldots, k-1$. We apply a \emph{renaming} to the nodes of $G_k^{(\ell)}$: we flip the $1^\mathrm{st}$ bit of $\vec{v}$ in every node named $u_{k-\ell, \vec{v}}$, and flip the $k^\mathrm{th}$ bit of $\vec{v}$ in every node named $u_{k-\ell+1, \vec{v}}$. Since this is a mere renaming of nodes, the resulting graph is isomorphic to $G_k^{(\ell)}$. However, it's also easy to see that the resulting graph follows the construction of $G_k^{(\ell+2)}$. Therefore, we assert that $G_k^{(\ell)}$ must be isomorphic to $G_k^{(\ell+2)}$.

	Now, let's ask $G=G_k^{(0)}$ and $H=G_k^{(1)}$. Obviously there is a $(k+1)$-clique in $G$: nodes $u_{j,0^k}, j=1,2, \ldots, k+1$ are mutually adjacent by definition of $G_k^{(0)}$. On the contrary, we have
	\begin{lemma}
		There's no $(k+1)$-clique in $H$.
	\end{lemma}

	The proof is given below. Assume there is a $(k+1)$-clique in $H$. Since there's no edge between nodes $u_{a, \vec{v}}$ with an identical $a$, the $(k+1)$-clique must contain exactly one node from every node set $\{u_{a,\vec{v}}\}$ for each fixed $a\in[k+1]$. We further assume that the $(k+1)$ nodes are $u_{a, b_{a1}b_{a2}\ldots b_{ak}}, a=1,2,\ldots, k+1$. Using the condition for adjacency, we have
	\begin{align}
		       & b_{2k} = b_{11},                                                               \\
		       & b_{3k} = b_{21}, b_{3(k-1)} = b_{12},                                          \\
		       & b_{4k} = b_{31}, b_{4(k-1)} = b_{22}, b_{4(k-2)} = b_{13},                     \\
		\notag & \cdots\cdots\cdots\cdots                                                       \\
		       & b_{(k+1)k} = b_{k1}, b_{(k+1)(k-1)} = b_{(k-1)2}, \ldots, b_{(k+1)1} = b_{1k}.
	\end{align}
	Applying the above identities to the summation
	\begin{align}
		\label{contradiction}
		\sum_{a=1}^{k+1}\sum_{j=1}^k b_{aj} = 2\sum_{j=1}^k\left(b_{1j}+b_{2j}+\cdots+b_{(k-j+1)j}\right),
	\end{align}
	we see that it should be even. However, by definition of $G_k^{(1)}$, there are an even number of $1$'s in $b_{a1}b_{a2}\ldots b_{ak}$ when $a\in [k]$, and an odd number of $1$'s when $a=k+1$. Therefore, the sum in \eqref{contradiction} should be odd. This leads to a contradiction.

	Finally, to prove the $k$-WL equivalence of $G$ and $H$, we have
	\begin{lemma}
		$k$-WL can't distinguish $G$ and $H$.
	\end{lemma}

	By virtue of the equivalence between $k$-WL and pebble games~\citep{grohe_otto_2015}, it suffices to prove that Player II will win the $\mathcal{C}_{k}$ bijective pebble game on $G$ and $H$. We state the winning strategy for Player II as following. Since $G$ and $H$ are isomorphic with nodes $\{u_{k+1,*}\}$ deleted, Player II can always choose an isomorphism $f:G-\{u_{k+1,*}\}\rightarrow H-\{u_{k+1,*}\}$ to survive if Player I never places a pebble on nodes $u_{k+1,*}$. Furthermore, since $k$ pebbles can occupy nodes with at most $k$ different values of $a$ (in $u_{a,\vec{v}}$), there's always a set of pebbleless nodes $\{u_{a_0,\vec{v}}\}$ with some $a_0\in[k+1]$. Therefore, Player II only needs to do proper renaming on $H$ between $u_{k+1,*}$ and $u_{a_0,*}$ as stated above. This makes every $\vec{v}$ in $u_{a_0,\vec{v}}$ have an odd number of $1$'s. Player II then chooses the isomorphism on $G-\{u_{a_0,*}\}$ and $H^{\mathrm{renamed}}-\{u_{a_0,*}\}$. This way, Player II never loses since there are not enough pebbles for Player I to make use of the oddity at the currently pebbleless set of nodes.

\end{proof}

\begin{remark}
    Notice that if we take $k=2$, then $G$ is two 3-cycles while $H$ is a 6-cycle, which 2-WL cannot differentiate; if we take $k=3$, then $G$ is the 4*4 Rook's graph while $H$ is the Shrikhande graph, which 3-WL cannot differentiate. In these special cases, the above construction complies with our well-known examples.
\end{remark}

\section{The Proof of Theorem 3.4}

We restate the theorem as follows:

\begin{theorem}
\label{th:sub}
For any connected substructure with no more than $k+m$ ($m\geq 2, k > 0$) nodes, there exists a subgraph GNN rooted at $k$-tuples with backbone GNN as powerful as $m$-WL that can count it. 
\end{theorem}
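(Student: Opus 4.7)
The plan is to reduce the task of counting copies of a connected pattern $S$ on $n\le k+m$ nodes in $G$ to computing, for each ordered $k$-tuple $\vec{v}$ of nodes in $G$, an ``extension count'' that the $m$-WL-equivalent backbone can extract after processing $G$ with $\vec{v}$ marked; summing these counts over $\vec{v}$ and correcting by an automorphism factor will then recover the subgraph count.

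First, I would fix an ordered $k$-subset $\vec{s}=(s_1,\dots,s_k)$ of $V(S)$, and for every $\vec{v}\in V(G)^k$ let $N(\vec{v})$ denote the number of embeddings $\phi:S\hookrightarrow G$ with $\phi(s_i)=v_i$ for all $i\in[k]$. Then the number of subgraph copies of $S$ in $G$ equals $|\mathrm{Aut}(S,\vec{s})|^{-1}\sum_{\vec{v}}N(\vec{v})$. Since a subgraph GNN rooted at $k$-tuples aggregates the backbone's output over all root tuples $\vec{v}$, it suffices to show that $N(\vec{v})$ is a function of the representation that the $m$-WL-equivalent backbone produces on the labeled graph $G_{\vec{v}}$ obtained from $G$ by giving each $v_i$ a unique marker.

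Next, I would parameterize each embedding counted by $N(\vec{v})$ by the image $\vec{u}\in V(G)^{n-k}$ of the $n-k\le m$ remaining ``free'' nodes of $S$. Whether $\vec{u}$ extends $\vec{v}$ into a copy of $S$ depends only on the isomorphism type of the labeled subgraph of $G_{\vec{v}}$ induced on $\vec{u}\cup\vec{v}$, including how $\vec{u}$ attaches to the marked positions. Padding $\vec{u}$ to an $m$-tuple $\vec{u}'$ by repetition, this indicator becomes a function of the stable $m$-WL color of $\vec{u}'$ in $G_{\vec{v}}$: by the standard characterization of $m$-WL (via the bijective pebble game used above), the stable color of an $m$-tuple encodes the isomorphism type of its induced labeled subgraph, and because the markers individualize $\vec{v}$, the color also captures the attachment pattern. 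Hence $N(\vec{v})=\sum_{\vec{u}'}\mathbf{1}[\vec{u}'\text{ completes }\vec{v}\text{ to }S]$ is a function of the stable $m$-WL colouring of $G_{\vec{v}}$.

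The hard part, I expect, will be the last step: $m$-WL-equivalence of the backbone only guarantees distinguishing graphs with different stable colour histograms, whereas I need the backbone to evaluate the specific numerical aggregate above. This requires a universal-approximation-style argument for functions of $m$-WL colourings --- concretely, using injective multiset aggregators (in the spirit of GIN) to encode the stable colour multiset of $G_{\vec{v}}$ into a vector, then summing learned indicators of the finitely many stable colours corresponding to $m$-tuples that complete $\vec{s}$ to $S$. Once this is in place, summing $N(\vec{v})$ over $\vec{v}$ and dividing by $|\mathrm{Aut}(S,\vec{s})|$ yields the subgraph count, completing the proof.
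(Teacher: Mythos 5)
Your proposal is sound, and although it rests on the same underlying decomposition as the paper --- the $k$ root nodes are handled by individualizing the rooted tuple, and the remaining at most $m$ pattern nodes are handled by the $m$-WL-strength backbone --- your counting mechanism is genuinely different. The paper's proof invokes Remark 3.3 to fix a connected root type, splits the pattern into the rooted part, the non-rooted part and the cross edges, argues case by case that any non-isomorphism among these parts is visible to the backbone (using distance-to-root labels for the cross edges), and then counts rather loosely via a color histogram over substructure isomorphism types. You instead compute, for each root tuple $\vec v$, an exact extension count $N(\vec v)$ as the number of (padded) $m$-tuples whose stable $m$-WL color in the individualized graph $G_{\vec v}$ certifies a completion of $\vec v$ to the pattern, and recover the global count by summation over roots and an automorphism correction. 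Your route makes explicit where the number comes from and what the readout must compute, which is precisely what the paper leaves vague; the paper's route, in turn, makes explicit the role of the connected root tuple and of distance-to-root features, which matter in the concrete subgraph-GNN setting where roots are connected $k$-tuples and only an $h$-hop rooted subgraph is processed.

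Three points in your argument need tightening. First, the pivotal claim that the stable color of $\vec u'$ in $G_{\vec v}$ ``captures the attachment pattern'' is not true of the initial color (which only sees the subgraph induced on $\vec u'$) and requires one refinement round: in the update at position $j$, the replacement $w=v_i$ produces the unique element of the multiset whose color carries marker $i$ at position $j$, and its isomorphism type reveals the adjacencies between $v_i$ and all entries of $\vec u'$ except the one at position $j$; repeating this at a second position $j''\neq j$ (this is where $m\geq 2$ is used) recovers the adjacency between $v_i$ and the entry at position $j$ as well, so after one round the color determines the full attachment pattern, and the node labels in the color also certify that no entry coincides with a root. Second, the normalization constant should be $|\mathrm{Aut}(S)|$ rather than the stabilizer $|\mathrm{Aut}(S,\vec s)|$: summing $N(\vec v)$ over all $\vec v$ counts all embeddings of $S$ into $G$, and each copy is hit exactly $|\mathrm{Aut}(S)|$ times, so dividing by the stabilizer overcounts by the orbit size of $\vec s$. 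Third, to fit the subgraph-GNN format of the theorem, choose $\vec s$ so that it induces a connected subgraph of $S$ (possible because $S$ is connected) and take the hop parameter at least $m$, so every extension lies inside the rooted subgraph and restricting to connected root tuples loses no embeddings. With these repairs, and with an injective (GIN-style) realization of the $m$-WL coloring so that the readout can implement your indicator sum, your argument is a valid --- and in places more rigorous --- proof of the statement.
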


\begin{proof}
Based on Remark 3.3 in the main paper, there exists a type of connected $k$-tuple that satisfies the decomposition of the connected substructure. Then the substructure can be separated into 2 subgraphs: the nodes that belong to the $k$-tuple (we call them rooted nodes) and the nodes that do not belong to the $k$-tuple (we call them non-rooted nodes). Formally, for the given two substructures $G_1 = (V_1, E_1)$ and $G_2 = (V_2, E_2)$, we define the subgraph that formed by the rooted nodes of $G_1$ ($G_2$, resp.) as $G_{1,r} = (V_{1, r}, E_{1,r})$ ($G_{2,r} = (V_{2,r}, E_{2,r})$, resp.). Similarly, define the subgraph that formed by the non-rooted nodes of $G_1$ ($G_2$, resp.) as $G_{1,n} = (V_{1, n}, E_{1,n})$ ($G_{2,n} = (V_{2,n}, E_{2,n})$, resp.). We can also define the subgraph formed between the rooted nodes and non-rooted nodes of $G_1$ ($G_2$, resp.) as $G_{1,c} = (V_{1, c}, E_{1,c})$ ($G_{2,c} = (V_{2,c}, E_{2,c})$, resp.). Then it is easy to find that for $G_1$, $V_1 = (V_{1,r} \cup V_{1, n})$, $V_{1,c} \subseteq V_1$, $E_1 = E_{1,r} \cup E_{1, n} \cup E_{1, c}$. There is no intersection between $V_{1,r}$ and $V_{1, n}$, and there is no intersection between $E_{1,r}$, $E_{1,c}$, and $E_{1, n}$. The same holds for $G_2$.

If $G_1$ and $G_2$ are non-isomorphic, then there can be three potential situations: (1) $G_{1,r}$ and $G_{2,r}$ are non-isomorphic; (2) $G_{1,n}$ and $G_{2,n}$ are non-isomorphic; (3) $G_{1,c}$ and $G_{2,c}$ are non-isomorphic. In the following section, we will prove that in all these situations, the subgraph GNN can distinguish between $G_1$ and $G_2$.

\textbf{$G_{1,r}$ and $G_{2,r}$ are non-isomorphic.}  It denotes that for $G_1$ and $G_2$, the selected $k$-tuples are different. Then of course the subgraph GNN can differentiate $G_1$ and $G_2$.

\textbf{$G_{1,n}$ and $G_{2,n}$ are non-isomorphic.} Recall that we use a backbone GNN as powerful as $m$-WL to encode the information within the subgraph, and $V_{1,n}$ and $V_{2,n}$ contain nodes no more than $m$ nodes. Therefore if $G_{1,n}$ and $G_{2,n}$ are non-isomorphic, then they will have different isomorphic types, and thus can be distinguished by the backbone GNN.

\textbf{$G_{1,c}$ and $G_{2,c}$ are non-isomorphic.} We define the label of all nodes in $G_1$ (the same holds for $G_2$) as its distance to the nodes in the $k$-tuple. Formally, let $V_{1,r} = \{v_{1,r,1},...,v_{1,r,k}\}$, then $\forall u \in V_1$, its label $f_1(u) = (d(u, v_{1,r,1}),...,d({u,v_{1,r,k}}), I(u))$, where $d(.)$ denotes the shortest path distance between two nodes, and $I(u)$ denotes the label that reflects the isomorphic type of $u$ encoded by the subgraph GNN within $G_{1,n}$.

Since the substructures are connected, there exists at least a node $u_1 \in V_{1,n}$, whose label contains at least an index with the value ``1". For $f(u_1)$, the indices with value ``1" denotes that there exist edges between $u_1$ and the corresponding nodes in $V_{1,r}$. While the indices with value larger than 1 denote that there is no edge between $u_1$ and the corresponding nodes in $V_{1,r}$. The same holds for $u_2$ and $V_{2,r}$. Therefore, if the subgraph formed by $u_1$ and $V_{1,r}$ and the subgraph formed by $u_2$ and $V_{2,r}$ are non-isomorphic, then $f_1(u_1)$ and $f_2(u_2)$ are different, and the subgraph GNN can differentiate the two substructures. Also, if the $I(u_1)$ and $I_(u_2)$ are different, then the subgraph GNN can also distinguish $G_1$ and $G_2$. We can then consider the next nodes, and continue the process inductively.

Therefore, if for all nodes in $V_{1,n}$, we can find a unique node in $V_{2,n}$ that has the same label as it. Then $G_{1,c}$ and $G_{2,c}$ are isomorphic. Reversely, if $G_{1,c}$ and $G_{2,c}$ are non-isomorphic, then there exists at least a node in $V_{1,n}$, that we cannot find a unique node in $V_{2,n}$ that has the same label as it. Therefore the subgraph GNN can differentiate $G_1$ and $G_2$.

Then we can assign each substructure a unique color according to its isomorphic type, and use the color histogram of the graph as the output function. If two graphs have different numbers of certain substructures, then the color histogram will be different.

Based on the above results, the subgraph GNN can count the given type of connected substructure.

\end{proof}

\section{The Proof of Theorem 4.4}
We state the result as follows.

\begin{theorem}
\label{th:iso}
ESC-GNN is strictly more powerful than 2-WL, while not less expressive than 3-WL.
\end{theorem}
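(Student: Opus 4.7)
The plan is to establish the two claims separately by exhibiting an explicit witness graph pair for each.

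\textbf{Step 1: ESC-GNN dominates 2-WL.} The ESC-GNN architecture performs message passing on each node-rooted subgraph and aggregates the resulting root features. By restricting attention to the root-centered messages on the ambient graph and discarding ESC-GNN's additional distance/structural labels, one recovers a standard MPNN update, whose distinguishing power equals 1-WL $=$ 2-WL. Hence $\text{ESC-GNN} \geq \text{2-WL}$.

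\textbf{Step 2: Strictness over 2-WL.} I would present a pair of graphs that 2-WL cannot distinguish but ESC-GNN can. The canonical choice is two disjoint triangles versus the $6$-cycle $C_6$: both are $2$-regular with identical 1-WL refinement histograms, yet differ in triangle count. By Theorem 4.2 (ESC-GNN's substructure counting power), ESC-GNN counts triangles, so the resulting per-node triangle count summed over the graph is a readout feature that separates the two, establishing the strict dominance.

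\textbf{Step 3: Not less expressive than 3-WL.} For the second claim, I would produce a pair of graphs that 3-WL fails to distinguish while ESC-GNN succeeds. The natural witness, already flagged in the remark after Theorem 3.2, is the $4\times 4$ Rook's graph versus the Shrikhande graph: both are strongly regular with identical parameters, so 3-WL cannot separate them. To show ESC-GNN separates them, I would appeal to Theorem 4.5 (which handles regular graphs directly) or, alternatively, invoke Theorem 4.2 with an induced substructure whose count genuinely differs between the pair, such as $K_4$, which occurs in the Rook's graph but not in Shrikhande.

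\textbf{Main obstacle.} The delicate step is the third one: the 1-hop neighborhoods of a vertex in Rook's and Shrikhande are themselves 1-WL-indistinguishable (two disjoint $K_3$'s versus $C_6$, both $2$-regular with the same color histograms), so a plain MPNN run on the rooted subgraph with only root-vs-nonroot marking does not obviously succeed. I would route around this by leaning on ESC-GNN's richer structural labeling, in particular its distance-to-root features combined with the counting guarantees from Theorem 4.2, rather than attempting a raw color-refinement calculation on the rooted subgraphs; the detailed mechanics can be deferred to the proof of Theorem 4.5 for regular graphs.
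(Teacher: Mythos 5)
Your proposal is correct and takes essentially the same route as the paper: the MPNN backbone gives at least 2-WL power, strictness over 2-WL follows from a 2-WL-indistinguishable pair separated via the counting guarantees of Theorem 4.2, and the 3-WL claim uses the $4\times 4$ Rook's graph versus the Shrikhande graph, which 3-WL cannot distinguish but ESC-GNN separates by counting 4-cliques (present in the Rook's graph, absent in Shrikhande). One caution: your fallback of appealing to Theorem 4.5 for the third step would not suffice on its own, since that result is only a probabilistic statement about almost all pairs of regular graphs and does not guarantee separation of this specific pair, but your primary argument via 4-clique counting is exactly the paper's.
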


\begin{figure}[btp]
	\centering
	\subfigure[]{
		\begin{minipage}[t]{0.43\linewidth}
			\centering
			\includegraphics[width=\columnwidth]{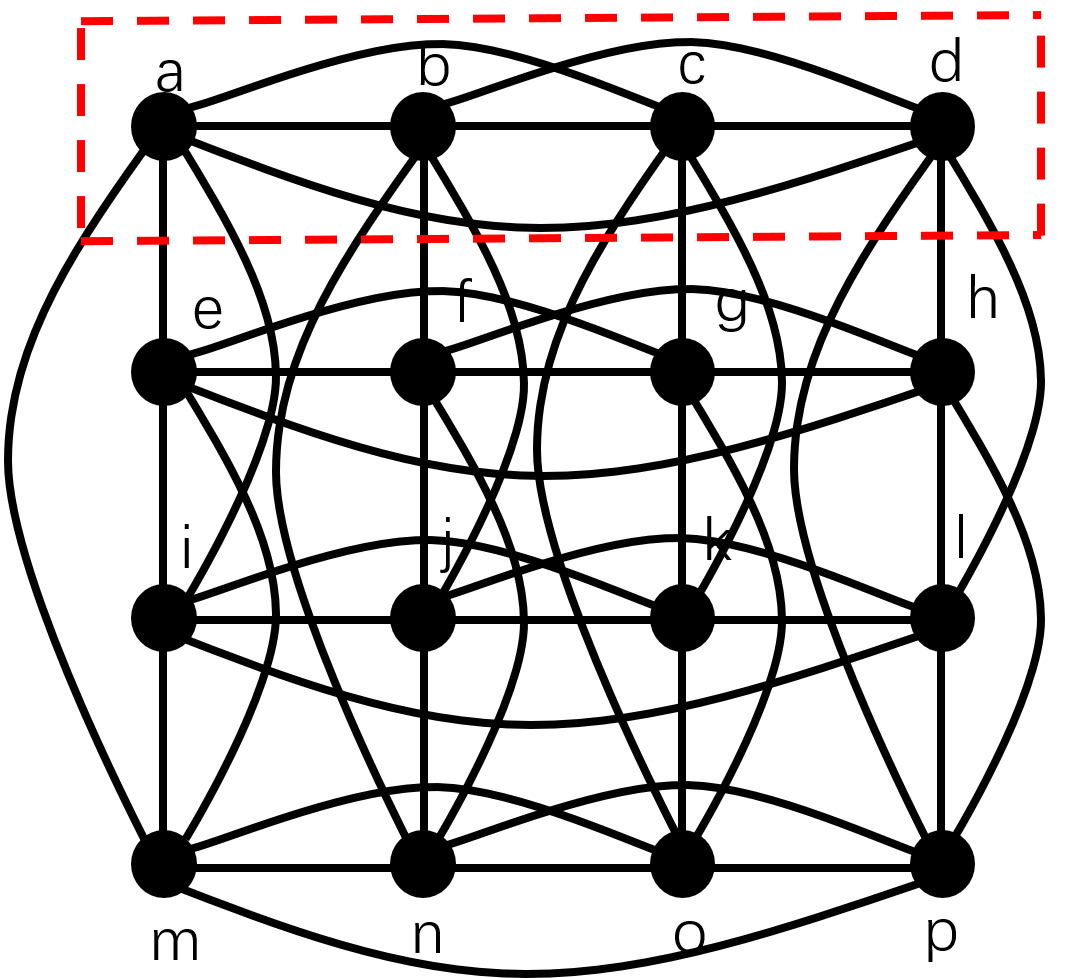}
		\end{minipage}%
	}%
	\subfigure[]{
		\begin{minipage}[t]{0.37\linewidth}
			\centering
			\includegraphics[width=\columnwidth]{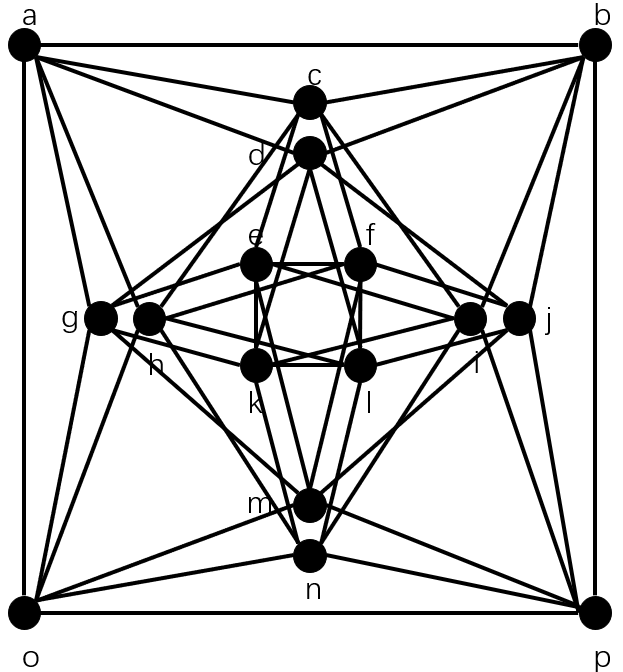}
		\end{minipage}%
	}%
	\centering
	\vspace{-.1in}
	\caption{(a) the 4*4 Rook Graph and (b) the Shrikhande Graph}
	\label{fig:3wl}
	\vspace{-.15in}
\end{figure}

 \begin{proof}
 \textbf{ESC-GNN is not less powerful than 3-WL.} As shown in Theorem~\ref{th:count}, ESC-GNN is able to count 4-cliques. In the pair of graphs called the 4*4 Rook Graph and the Shrikhande Graph (shown in Figure~\ref{fig:3wl}), there exist several 4-cliques in the 4*4 Rook Graph, while there is no 4-clique in the Shrikhande Graph. Therefore ESC-GNN can differentiate the pair of graphs. Considering that 3-WL cannot differentiate them~\citep{arvind2020weisfeiler},  ESC-GNN is not less powerful than 3-WL.
 
 \textbf{ESC-GNN is more powerful than 2-WL.} Using the MPNN~\citep{xu2018powerful} as the backbone network, it can be as powerful as 2-WL in terms of distinguishing non-isomorphic graphs. However, there exist pairs of graphs, e.g., the 4*4 Rook Graph and the Shrikhande Graph that can be distinguished by ESC-GNN but not 2-WL. Therefore, ESC-GNN is strictly more powerful than the 2-WL.

 \end{proof}
 
\section{The Proof of Theorem 4.2}
Below we show the counting power of ESC-GNN in terms of subgraph counting. 

\begin{figure*}[btp]
\begin{center}
\centerline{\includegraphics[width=0.8\columnwidth]{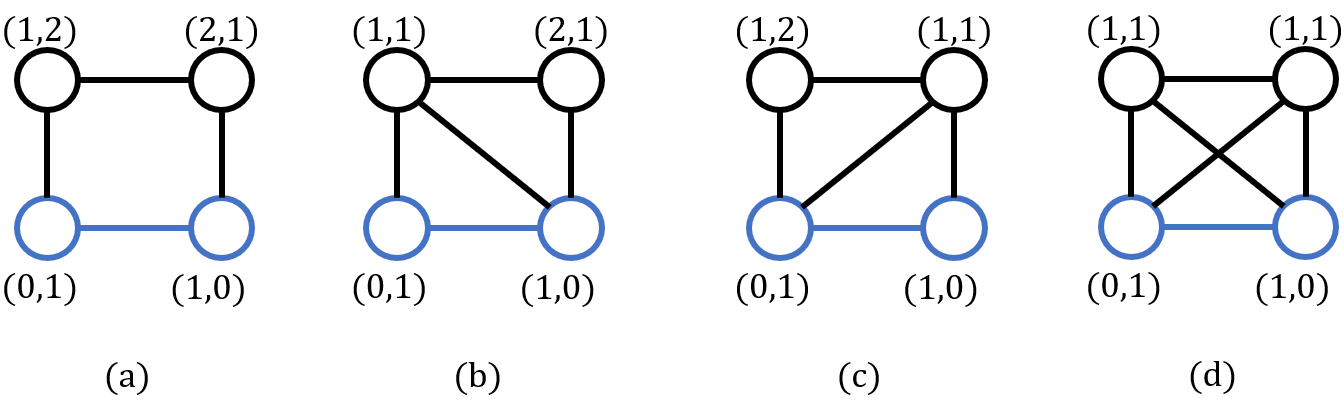}}
\caption{Examples of 4-cycles that pass the rooted edges. In these figures, the rooted 2-tuples are colored blue.}
\label{fig:4cycle}
\end{center}
\end{figure*}



\begin{theorem}
\label{th:count}
In terms of subgraph counting, ESC-GNN can count (1) up to 4-cycles; (2) up to 4-cliques; (3) stars with arbitrary sizes; (4) up to 3-paths.
\end{theorem}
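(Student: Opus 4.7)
The plan is to split the four substructure types into two cases: items (1), (2), and (4), which are all connected graphs on exactly four nodes and thus fall within the reach of Theorem~\ref{th:sub}; and item (3), which requires a separate degree-based argument because stars can have arbitrarily many nodes. For the first group, I would invoke Theorem~\ref{th:sub} with $k=2$ (ESC-GNN is rooted at 2-tuples, i.e.\ edges) and $m=2$ (the MPNN backbone is as powerful as 2-WL), which gives $k+m=4$, exactly matching the size of 4-cycles, 4-cliques, and 3-paths. To make the argument concrete for each substructure, I would trace the decomposition from the proof of Theorem~\ref{th:sub}: the rooted part $G_r$ is a single edge inside the pattern, the non-rooted part $G_n$ consists of the remaining (at most two) vertices, and $G_c$ records how the edge endpoints attach to the rest. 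Figure~\ref{fig:4cycle} illustrates the 4-cycle case: the 4-cycles through a rooted edge $(u,v)$ correspond to pairs $(x,w)$ with $x\in N(u)\setminus\{v\}$, $w\in N(v)\setminus\{u\}$, $x\neq w$, and $(x,w)\in E$, all detectable inside the edge-rooted subgraph.

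For item (3), I would argue directly from node-degree information. The number of $n$-stars in $G$ equals $\sum_{v\in V}\binom{d(v)}{n}$, a function only of the degrees $d(v)$. Since an MPNN can read off $d(v)$ in a single aggregation step and ESC-GNN is at least as expressive as MPNN, the count is computable within ESC-GNN for every $n$, with no size restriction.

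The main obstacle I anticipate is turning the existential guarantee of Theorem~\ref{th:sub} into explicit, non-overcounting formulas for each substructure. Each 4-cycle contains four edges, each 4-clique contains six, and each 3-path contains three; after aggregating per-rooted-edge counts one must divide by the correct multiplicity, and degenerate completions (e.g.\ pairs $(x,w)$ with $x=w$, which would produce a triangle rather than a 4-cycle) must be excluded explicitly. This bookkeeping is mechanical but is where a careful proof must not drop constants or miss boundary cases.
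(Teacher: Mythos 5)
There is a genuine gap in your treatment of items (1), (2), and (4). Invoking Theorem~\ref{th:sub} with $k=2$, $m=2$ only establishes that a \emph{genuine subgraph GNN} rooted at 2-tuples, which runs an MPNN on every rooted subgraph, can count connected patterns on at most four nodes. ESC-GNN is not such a model: it replaces the per-subgraph GNN runs by precomputed structural features (degree encoding, node-level and edge-level distance encodings), and the only relation the paper gives between the two is Proposition~4.1 of the main text, which states that ESC-GNN is \emph{at most} as powerful as the edge-rooted subgraph GNN. The paper uses that inequality exclusively for the negative statements (e.g.\ ESC-GNN cannot count 5-cliques because 4-WL, and hence the edge-rooted subgraph GNN, cannot); it never lets counting power flow from the subgraph GNN down to ESC-GNN, and your argument needs exactly that invalid direction. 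Consequently the positive claims for 4-cycles, 4-cliques and 3-paths must be proved directly from ESC-GNN's fixed features, which is what the paper does: 3-cliques/3-cycles are read off from the number of nodes at distance 1 to both rooted nodes; 4-cliques from the number of edges with edge-level distance label $(1,1,1,1)$; 4-cycles from a weighted sum of edges labeled $(1,2,2,1)$, $(1,1,2,1)$, $(1,2,1,1)$, $(1,1,1,1)$; and 2-/3-paths between the endpoints of a 2-tuple (allowing non-edge 2-tuples) reduce to 3-/4-cycle counting. Your sentence ``all detectable inside the edge-rooted subgraph'' is precisely the step you left unproved — detectable by what? You must show the completions $(x,w)$ are visible to the precomputed distance labels, not merely to a learned GNN run on the subgraph.

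Your item (3) is fine and essentially matches the paper: the paper counts nodes at shortest-path distance 1 to the first rooted node, i.e.\ its degree, and applies a binomial formula, which is the same degree-based argument you give (and the degree encoding is indeed part of ESC-GNN's features). Your worry about dividing by multiplicities is legitimate bookkeeping but secondary; the real repair needed is to replace the appeal to Theorem~\ref{th:sub} by an explicit argument that ESC-GNN's edge-level distance encoding already determines the per-tuple counts of 4-cycles, 4-cliques and 3-paths. Note also that the paper's proof of this theorem additionally supplies the sharpness half of ``up to'' (counter-examples for 5-cycles and the 5-clique impossibility via Proposition~4.1), which your proposal does not address.
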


\begin{figure*}[btp]
\begin{center}
\centerline{\includegraphics[width=0.5\columnwidth]{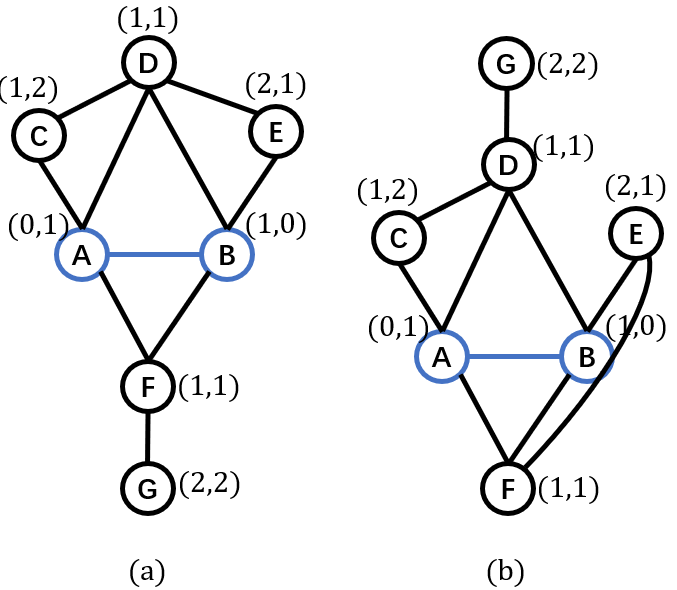}}
\caption{Examples where ESC-GNN cannot subgraph-count 5-cycles. In these figures, the rooted 2-tuples are colored blue.}
\label{fig:5cycle_sub}
\end{center}
\end{figure*}

\begin{proof}
\textbf{Clique counting.} The number of 3-cliques is the number of nodes with the shortest path distance ``1" to both rooted nodes; the number of 4-cliques is the number of edges labeled $(1,1,1,1)$ (definition see the edge-level distance encoding in Section 4 in the main paper, and example see Figure~\ref{fig:4cycle}(d)). Therefore, ESC-GNN can count these types of cliques. In terms of 5-cliques, 4-WL cannot count them according to Theorem~\ref{th:disab}, therefore subgraph GNNs rooted on edges with MPNN as the backbone GNN cannot count 5-cliques according to Proposition 3.5 in the main paper. Then according to Proposition 4.1 in the main paper, ESC-GNN also cannot count 5-cliques.

\textbf{Cycle counting.} the counting of 3-cycles is the same as 3-cliques. In terms of counting 4-cycles, there are basically 4 different situations where 4-cycles exist, examples are shown in Figure~\ref{fig:4cycle}. Note that figures (a),(b), and (c) contain one 4-cycle, respectively, while figure (d) contains two 4-cycles that pass the rooted edge. Therefore the number of 4-cycles is the weighted sum of the number of edges with labels $(1,2,2,1)$, $(1,1,2,1)$, $(1,2,1,1)$, $(1,1,1,1)$.

In terms of 5-cycle subgraph-counting, we provide a counter-example in Figure~\ref{fig:5cycle_sub}. In Figure~\ref{fig:5cycle_sub}(a), there is one 5-cycle $ABEDC$ that passes $AB$, while there is no 5-cycle that passes $AB$ in Figure~\ref{fig:5cycle_sub}(b). Considering that the degree information and the distance information is the same for the pair of graphs, ESC-GNN cannot differentiate them. Therefore, ESC-GNN cannot subgraph-count 5-cycles.

\textbf{Path Counting.} Note that the definition of paths here is different from other substructures. It denotes the number of edges within the path instead of the number of nodes within the path. Here, we slightly extend the use of 2-tuples in ESC-GNN by considering not only 2-tuples with edges but also 2-tuples without edges.

In terms of counting 2-paths or 3-paths between 2 nodes, it is equal to counting 3-cycles or 4-cycles, between edges, respectively. We have proven that ESC-GNN can count these cycles, therefore ESC-GNN can count such edges.

\textbf{Star counting.} We can decompose the graph-level star counting problem to 2-tuples by considering the first node of each 2-tuple as the root of stars. Examples are shown in Figure~\ref{fig:star}. We advocate that the number of stars is easily encoded by the number of nodes whose shortest path distance is 1 to the first rooted node. Denote the number of nodes with the shortest path distance 1 to the first rooted node as $N'$ (including the second node), then the number of $p$-stars is $C_{N'-1}^{p-2}$. A similar proof is provided by~\citep{chen2020can}.

\end{proof}






\begin{figure*}[btp]
\begin{center}
\centerline{\includegraphics[width=0.7\columnwidth]{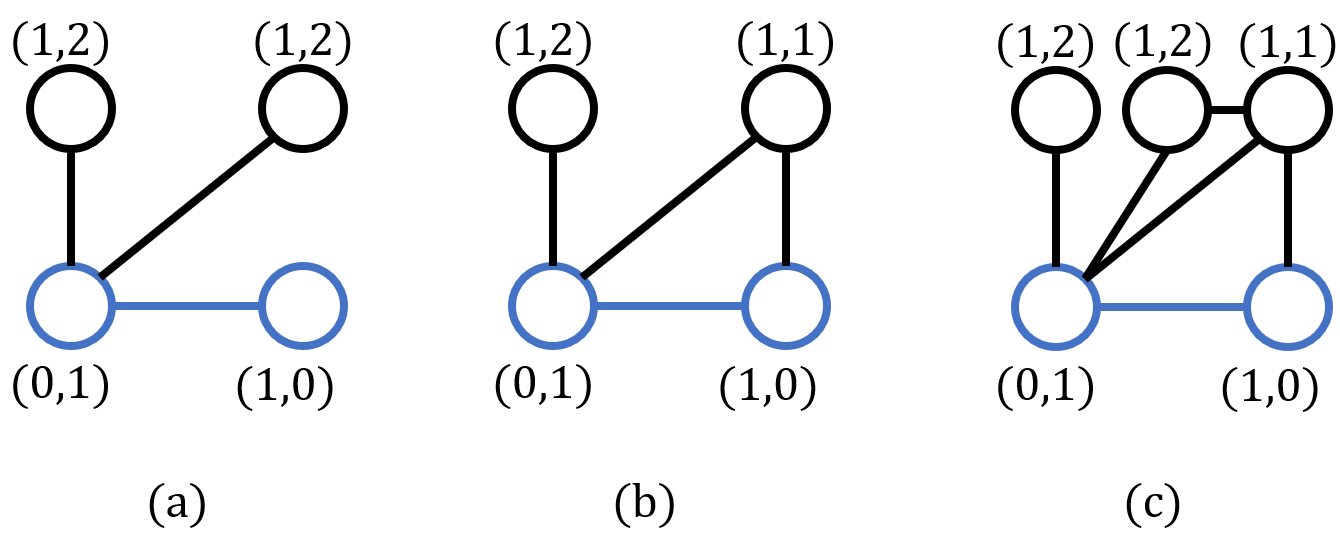}}
\caption{Examples of stars that pass the rooted edges. In these figures, the rooted 2-tuples are colored blue.}
\label{fig:star}
\end{center}
\end{figure*}


\section{The Proof of Theorem 4.3}

Below we show the counting power of ESC-GNN in terms of induced subgraph counting. We restate the Theorem 4.3 as follows.

\begin{theorem}
\label{th:count_ind}
In terms of induced subgraph counting, ESC-GNN can count (1) up to 4-cycles; (2) up to 4-cliques; (3) up to 4-stars; (4) up to 3-paths.
\end{theorem}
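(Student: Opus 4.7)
The plan is to reduce induced subgraph counting to ordinary subgraph counting via Möbius inversion on the lattice of supergraphs on a fixed vertex set, and then invoke Theorem~\ref{th:count} together with two small additional arguments. Concretely, for any pattern $H$ on $k$ vertices, letting $S_H(G)$ and $I_H(G)$ denote the subgraph and induced counts respectively,
\begin{equation}
S_H(G) = \sum_{H'\supseteq H}\;\alpha_{H,H'}\, I_{H'}(G),
\end{equation}
where the sum is taken over isomorphism classes of graphs $H'$ on the same $k$ vertices that contain $H$ as a spanning subgraph, and $\alpha_{H,H'}$ counts the copies of $H$ inside $H'$. Since this relation is upper-triangular in the ``is-a-supergraph-of'' partial order, it can be inverted to $I_H(G) = \sum_{H' \supseteq H} \beta_{H,H'}\, S_{H'}(G)$ with signed integer coefficients. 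Hence it suffices to exhibit, for each of the four target patterns, that ESC-GNN can subgraph-count every supergraph $H'$ appearing in the sum.

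The second step is to list those supergraphs explicitly. For $K_4$ the only supergraph is itself, giving $I_{K_4}=S_{K_4}$, which is handled by Theorem~\ref{th:count}. For $C_4$ the supergraphs are $C_4$, the diamond ($K_4$ minus an edge), and $K_4$. For the $4$-star $K_{1,3}$ they are $K_{1,3}$, the paw (a triangle with a pendant edge), the diamond, and $K_4$. For the $3$-path $P_4$ they are $P_4$, $C_4$, the paw, the diamond, and $K_4$. All of $C_4$, $K_4$, $K_{1,3}$, and $P_4$ are already subgraph-countable by Theorem~\ref{th:count}; hence the main obstacle is to subgraph-count the two intermediate patterns -- the \emph{paw} and the \emph{diamond} -- which do not appear in Theorem~\ref{th:count}.

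I expect this to be the hardest part, but manageable in two different ways. The direct option is to mimic the Theorem~\ref{th:count} proof with edge-level distance encodings: for a rooted 2-tuple $(u,v)$, split into cases by the role that $(u,v)$ plays in a paw (pendant edge vs.\ triangle edge) or a diamond (the shared edge of the two triangles vs.\ an outer edge), and for each case characterize the additional vertices by their distance labels to $u$ and $v$; a weighted sum over these configurations yields the desired subgraph count. A cleaner alternative is to appeal to Theorem~\ref{th:sub} with $k=2$ and $m=2$: every connected pattern on at most $k+m=4$ vertices is countable by a subgraph GNN rooted at 2-tuples whose backbone is as powerful as 2-WL, a specification met by ESC-GNN. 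Once the paw and diamond counts are in hand, substituting them together with the counts from Theorem~\ref{th:count} into the Möbius formula for each of $K_4$, $C_4$, $K_{1,3}$, and $P_4$ yields the four induced counts, finishing the proof.
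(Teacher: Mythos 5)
Your overall route is genuinely different from the paper's. The paper never performs an inclusion--exclusion step: it reads the induced counts off the structural encodings directly, because a distance-$2$ entry in the node- or edge-level distance encoding already certifies non-adjacency to the corresponding rooted node. Concretely, induced $4$-cycles through a rooted edge are exactly the edges labeled $(1,2,2,1)$ (only the configuration of Figure~\ref{fig:4cycle}(a)), induced $3$- and $4$-stars are read off from the number of $(1,2)$-labeled nodes as in Figure~\ref{fig:star}(a), cliques are unchanged since they are complete, and paths reduce to the cycle case as in Theorem~\ref{th:count}; the paper then adds counterexamples (Figures~\ref{fig:5cycle} and~\ref{fig:5star}) showing the claim does not extend to $5$-cycles or $5$-stars. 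Your M\"obius-inversion reduction $I_H=\sum_{H'\supseteq H}\beta_{H,H'}S_{H'}$ is valid, and your supergraph lists for $C_4$, $K_{1,3}$ and $P_4$ are correct; what it buys is modularity (induced counting follows formally once enough subgraph counts are available), at the cost of creating new obligations -- subgraph-counting the paw and the diamond -- that Theorem~\ref{th:count} does not cover.

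The way you discharge those obligations is where there is a real problem. The ``cleaner alternative'' of invoking Theorem~\ref{th:sub} with $k=m=2$ is not available for ESC-GNN: that theorem is a lower bound for genuine subgraph GNNs that run an $m$-WL-powerful backbone \emph{inside each rooted subgraph}, whereas ESC-GNN replaces the per-subgraph GNN by precomputed structural encodings, and Proposition 4.1 in the main paper only bounds ESC-GNN \emph{from above} by edge-rooted subgraph GNNs. So counting power cannot be transferred from Theorem~\ref{th:sub} down to ESC-GNN; the paper itself only ever uses that comparison in the negative direction (e.g.\ to rule out $5$-clique counting). Your first option -- a direct case analysis of paw and diamond occurrences through a rooted $2$-tuple via the edge-level distance labels (common neighbours, $(1,1,1,1)$- and $(1,1,1,2)$-labeled edges, etc.) -- does work, but it is exactly the style of argument the paper uses, and once you have those label characterizations you can read off the induced counts directly and skip the inversion altogether, which is what the paper does. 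As written, then, your proposal is sound in outline only if the paw/diamond step is carried out explicitly and the appeal to Theorem~\ref{th:sub} is dropped.
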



\begin{figure*}[btp]
\begin{center}
\centerline{\includegraphics[width=0.6\columnwidth]{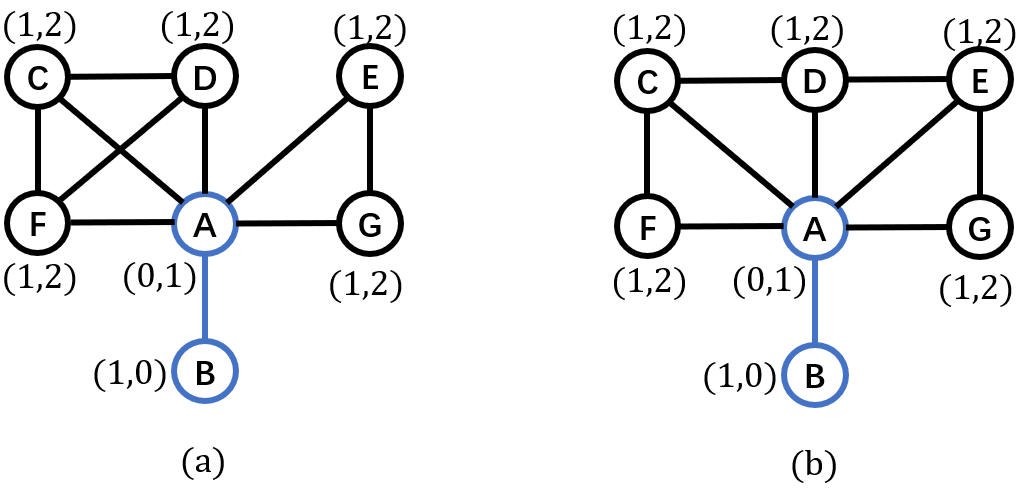}}
\caption{Examples where ESC-GNN cannot induced-subgraph-count 5-stars. In these figures, the rooted 2-tuples are colored blue.}
\label{fig:5star}
\end{center}
\end{figure*}

\begin{proof}
\textbf{Clieque counting.} Since cliques are fully connected substructures, the proof of cliques is the same as the proof of subgraph counting.

\textbf{Cycle counting.} For cycles, the number of 3-cycles is the same as 3-cliques. As for 4-cycles, we only need to consider the situation shown in Figure~\ref{fig:4cycle}(a), where the number of $(1,2,2,1)$ edges reflects the number of 4-cycles. For induced-subgraph-counting 5-cycles, in Figure~\ref{fig:5cycle}(a), there is no 5-cycle that pass $AB$, while in Figure~\ref{fig:5cycle}(b), there is one 5-cycle $ABEDC$ that passes $AB$. However, ESC-GNN cannot differentiate the two graphs since the degree information and the distance information is the same. Therefore, ESC-GNN cannot induced-subgraph-count 5-cycles. It can serve as the same example for not counting 4-paths.

\textbf{Path counting.} The proof of paths is actually the same as Theorem~\ref{th:count}.

\textbf{Star counting.} In terms of 3-stars and 4-stars, we only need to consider the situation shown in Figure~\ref{fig:star}(a), where the number of $(1,2)$ nodes encodes the number of stars, i.e., denote the number of $(1,2)$ nodes as $N'$, the number of $p$-stars ($p \leq 4$) is $C_{N'}^{p-2}$. 

For 5-stars, a pair of examples are shown in Figure~\ref{fig:5star}. These two graphs will be assigned the same structural embedding since the node degree information, and the distance information among these two graphs are the same. Therefore, ESC-GNN cannot differentiate the two graphs. However, Figure~\ref{fig:5star}(a) contains no 5-star that passes the rooted 2-tuple $AB$, while Figure~\ref{fig:5star}(b) contains one 5-star ($BAFDG$) that passes the rooted 2-tuple $AB$.
\end{proof}

\begin{figure*}[btp]
\begin{center}
\centerline{\includegraphics[width=0.55\columnwidth]{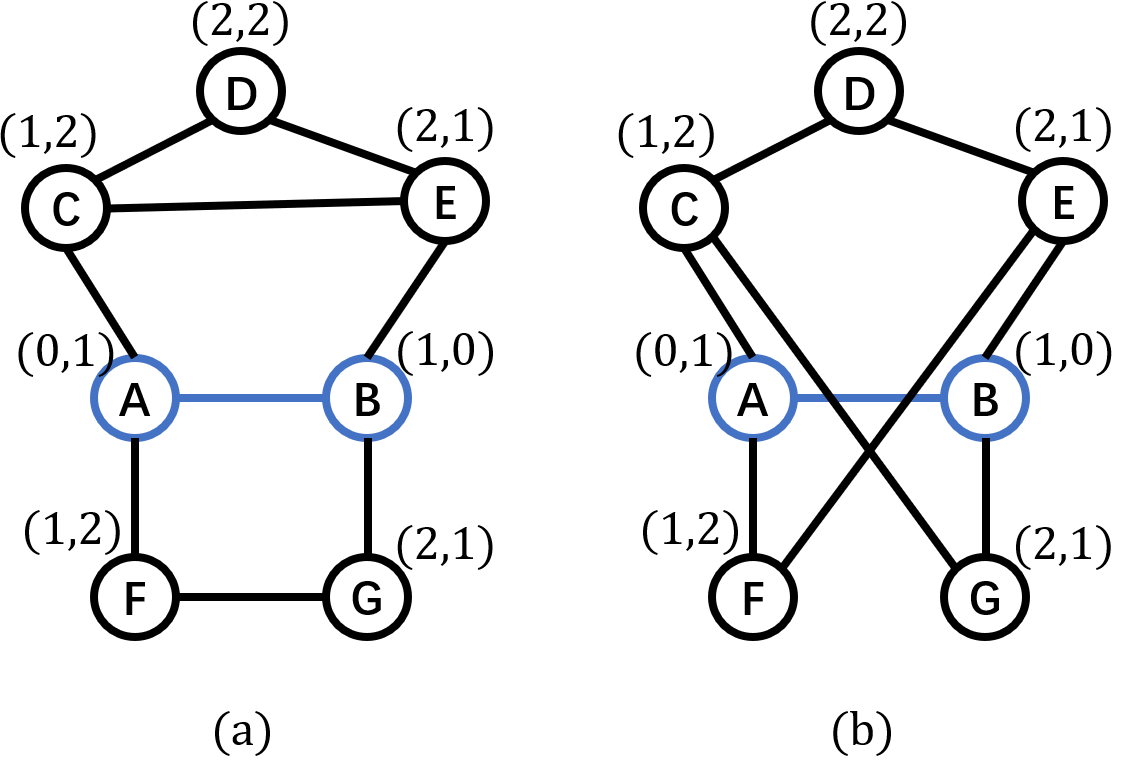}}
\caption{Examples where ESC-GNN cannot induced-subgraph-count 5-cycles. In these figures, the rooted 2-tuples are colored blue.}
\label{fig:5cycle}
\end{center}
\end{figure*}

\section{The Proof of Theorem 4.5}

Here we restate Theorem 4.5 as follows: 


\begin{theorem}
\label{th:reg}
Consider all pairs of $r$-regular graphs with $n$ nodes, let $3\leq r < (2log2n)^{1/2}$ and $\epsilon$ be a fixed constant. With the hop parameter $h$ set to $\lfloor(1/2 + \epsilon)\frac{log2n}{log(r-1)}\rfloor$, there exists an ESC-GNN that can distinguish $1-o(n^{-1/2})$ such pairs of graphs.
\end{theorem}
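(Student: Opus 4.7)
\section*{Proof proposal for Theorem~\ref{th:reg}}

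The plan is to reduce the claim to a classical fact about random regular graphs due to Bollob\'as: when $3\le r<(2\log 2n)^{1/2}$, the local BFS tree structure up to depth $h=\lfloor(1/2+\epsilon)\log(2n)/\log(r-1)\rfloor$ uniquely identifies a random $r$-regular graph up to isomorphism with probability $1-o(n^{-1/2})$. The hop parameter is calibrated precisely so that $(r-1)^h$ slightly exceeds $\sqrt{2n}$, the threshold at which $h$-hop neighborhoods become ``large enough'' to distinguish typical pairs of such graphs while still being tree-like with high probability. The proof therefore consists of three essentially independent pieces: (i) showing that ESC-GNN's embedding of a rooted subgraph faithfully encodes the rooted BFS tree when that BFS ball is a tree; (ii) invoking the Bollob\'as-style probabilistic fact; and (iii) combining them.

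First, I would fix a node $u$ (or a 2-tuple containing $u$) and consider the subgraph $B_h(u)$ induced on its $h$-hop neighborhood, with each vertex $v$ labeled by the shortest-path distance $d(u,v)$. On the event that $B_h(u)$ is a tree, the distance label of a vertex equals its depth in that tree. Since the MPNN backbone of ESC-GNN is at least as powerful as 1-WL on distance-labeled subgraphs, the coloring it produces refines the rooted-tree isomorphism type: two roots $u,u'$ whose BFS trees $B_h(u),B_h(u')$ are non-isomorphic as rooted trees receive distinct ESC-GNN embeddings. Aggregating over all roots then yields a graph-level fingerprint that contains the multiset of rooted $h$-BFS tree isomorphism types of $G$.

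Second, I would apply the random regular graph result. With the stated range of $r$, the configuration model / switching method shows that with probability $1-o(n^{-1/2})$: (a) for every vertex $u$, the ball $B_h(u)$ is a tree (no cycles of length $\le 2h$ through $u$), and (b) the multiset of rooted $h$-BFS trees of a uniformly random $r$-regular graph is a complete isomorphism invariant in the sense that two independent samples $G,H$ from the uniform distribution on $r$-regular graphs on $n$ vertices produce distinct multisets unless $G\cong H$. This is precisely the content of the classical Bollob\'as-style distinguishing lemma that underlies the analogous expressiveness theorems for distance encoding GNNs. Composing with step one, ESC-GNN's graph embedding separates $G$ and $H$ on this event, and therefore distinguishes a $1-o(n^{-1/2})$ fraction of all pairs of non-isomorphic $r$-regular graphs.

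The main obstacle is step two, namely the careful probabilistic accounting. One must control both the probability that the BFS ball up to depth $h$ is a tree (which is where the bound $r<(2\log 2n)^{1/2}$ and the precise form of $h$ enter, via a union bound on short cycles through any vertex) and the probability that two independent random $r$-regular graphs share the same multiset of rooted BFS trees (which uses a second-moment argument on the number of rooted tree types). Both of these are standard but delicate; in practice I would cite the corresponding lemma from the Bollob\'as literature and the distance-encoding analyses of prior work rather than redo the switching argument. Once the probabilistic statement is quoted, the GNN-side argument in step one is essentially bookkeeping, since distance labels on a tree determine the tree up to rooted isomorphism.
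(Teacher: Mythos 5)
There is a genuine gap, and it sits exactly where you located the ``main obstacle.'' Your step (ii) asserts that, with probability $1-o(n^{-1/2})$, every ball $B_h(u)$ is a tree at radius $h=\lfloor(1/2+\epsilon)\log(2n)/\log(r-1)\rfloor$, and that the multiset of rooted $h$-BFS \emph{tree} types then distinguishes typical pairs. Both halves fail, and they fail together: at this radius the ball contains roughly $(r-1)^h\approx(2n)^{1/2+\epsilon}$ vertices, which is past the birthday-paradox threshold, so the expected number of cycles inside $B_h(u)$ diverges and the ball is \emph{not} tree-like with high probability. Worse, if the balls were trees your invariant would be useless for this theorem: in an $r$-regular graph a tree-like $h$-ball is always the same object (the depth-$h$ tree with root degree $r$ and internal degrees $r$), so every vertex of every such graph would get the identical rooted BFS tree type and the multisets could never separate two regular graphs. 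The whole point of pushing $h$ to exponent $1/2+\epsilon$ (rather than $1/2-\epsilon$) is to leave the tree regime so that the collision pattern --- the deviations of the shell sizes $|\Gamma_1(v)|,\dots,|\Gamma_h(v)|$ from $r(r-1)^{i-1}$ --- carries the distinguishing randomness. Your calibration remark (``large enough to distinguish while still being tree-like'') is internally inconsistent for regular graphs.

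The paper's actual argument uses precisely this shell/distance-profile information: it observes that the node-level distance encoding $D^{h,\mathrm{node}}_{v,G}$ records the same data as the node configuration of Feng et al., quotes the Bollob\'as-type lemma that for two independently sampled $r$-regular graphs and any fixed pair of vertices $v_1,v_2$ the configurations coincide with probability $o(n^{-3/2})$, and then takes a union bound over the $n$ choices of $v_2$ to get failure probability $o(n^{-1/2})$, so some vertex of $G_1$ has a distance encoding appearing in $G_1$ but not in $G_2$. To repair your write-up you should (a) replace the rooted-BFS-tree invariant by the distance/shell profile of the $h$-hop subgraph, which ESC-GNN's node-level distance encoding captures directly without any tree assumption, (b) state the probabilistic input at the per-vertex-pair level with the $o(n^{-3/2})$ rate (this exponent is what survives the union bound; a per-graph $1-o(n^{-1/2})$ statement is not what the cited lemma gives), and (c) make the union-bound step explicit rather than folding it into ``composing.''
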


Recall that we encode the distance information as augmented structural features for ESC-GNN. For the input graph $G$, denote the node-level distance encoding on $h$-hop subgraphs on 2-tuple $(u,v)$ as $D_{(u,v),G}^{h,\text{node}}$. Note that we can naturally transfer the encoding on 2-tuples to node by: ($D_{v,G}^{h,\text{node}} = D_{(v,v),G}^{h,\text{node}}$). We then introduce the following lemma.

\begin{lemma}
\label{lemma:reg}
For two graphs $G_1 = (V_1, E_1)$ and $G_2 = (V^2, E^2)$ that are randomly and independently sampled from $r$-regular graphs with n nodes ($3 \leq r < (2log2n)^{1/2}$). Select two nodes $v_1$ and $v_2$ from $G_1$ and $G_2$ respectively. Let $h = \lfloor(1/2 + \epsilon)\frac{log2n}{log(r-1)}\rfloor$ where $\epsilon$ is a fixed constant, $D_{v_1,G_1}^{h,\text{node}} = D_{v_2, G_2}^{h, \text{node}}$ with probability at most $o(n^{-3/2})$.
\end{lemma}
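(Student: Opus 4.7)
The plan is to treat the node-level distance encoding $D_{v,G}^{h,\text{node}}$ as the BFS level profile $(k_0(v), k_1(v), \ldots, k_h(v))$ with $k_i(v) = |N_i(v)|$, and to show that in a random $r$-regular graph the joint distribution of this vector is spread out enough that two independent copies collide with probability $o(n^{-3/2})$. The chosen depth $h = \lfloor(1/2+\epsilon)\log(2n)/\log(r-1)\rfloor$ is critical: the corresponding ball has size $(r-1)^h \approx (2n)^{1/2+\epsilon}$, sitting just above the birthday-paradox threshold $\sqrt{n}$. Below this threshold the BFS is almost surely a tree, so the encoding is deterministic and carries no distinguishing information; above it, a macroscopic number of back-edges appear and the encoding inherits genuine randomness that we can exploit.

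To carry this out I would expose the random $r$-regular graph via the configuration (pairing) model and reveal the BFS from $v$ one half-edge at a time. The key quantity is the deficit $Y_i := r(r-1)^{i-1} - k_i(v)$, which measures how much smaller the $i$-th level is than the corresponding level of a perfect $(r-1)$-ary tree. At level $i$, each newly revealed half-edge closes a cycle with probability $\Theta(S_i/n)$ where $S_i = \sum_{j\le i} k_j$, so a Chen--Stein / Poisson coupling gives that $Y_h$ is approximately $\mathrm{Pois}(\mu_h)$ with $\mu_h = \Theta((r-1)^{2h}/n) = \Theta(n^{2\epsilon})$, which diverges. A standard local-limit / Stirling estimate then yields the single-coordinate anti-concentration $\sup_k \Pr[Y_h = k] = O(\mu_h^{-1/2}) = O(n^{-\epsilon})$. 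Because the encoding is a \emph{vector} rather than a scalar, I would iterate this bound across depths $h_0 < i \le h$ for $h_0 \approx (1/2)\log(2n)/\log(r-1)$, using approximate conditional independence of successive deficits via a Doob-martingale / sequential-pairing argument, so that per-level anti-concentration multiplies and brings the joint collision probability below $o(n^{-3/2})$.

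The main obstacle will be establishing strong enough independence across BFS levels: the deficit at level $i$ is correlated with earlier deficits through the shared exploration history, so naively multiplying anti-concentration bounds is not justified. I would handle this by coupling the two BFS processes on $G_1$ and $G_2$ to independent configuration-model pairings and applying Azuma-type concentration to the Doob martingale obtained by sequentially revealing half-edges. A secondary difficulty is uniformity in $r$: since the hypothesis $3 \le r < (2\log 2n)^{1/2}$ allows $r$ to grow with $n$, both the Poisson-approximation error and the boundary of the tree-like regime must be tracked with explicit $r$-dependence. I would borrow the small-subgraph count estimates of Bollob\'as and Wormald for random regular graphs and adapt the distance-encoding analysis of Li et al.\ (Distance Encoding, NeurIPS 2020) to promote their $o(n^{-1/2})$-type collision bound to the $o(n^{-3/2})$ required here, which is what subsequently allows a union bound over the $\binom{n}{2}$ graph pairs in Theorem~\ref{th:reg}.
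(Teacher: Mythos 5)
Your plan is sound in outline, but it takes a genuinely different route from the paper: the paper does not re-derive any probabilistic estimate at all. Its proof of Lemma~\ref{lemma:reg} consists of the single observation that $D^{h,\text{node}}_{v,G}$ records exactly the ``node configuration'' (the vector of BFS level sizes up to radius $h$) used in~\citep{feng2022powerful}, so the statement \emph{is} their Lemma~1, which in turn is Bollob\'as's theorem on distinguishing vertices of random regular graphs~\citep{bollobas1982distinguishing} for precisely this $h$ and this range of $r$. What you sketch --- configuration model, sequential revelation of the BFS, Poisson-type behaviour of the per-level deficits, and anti-concentration multiplied across the levels above the $\sqrt{n}$ threshold --- is essentially a reconstruction of Bollob\'as's original argument; your route buys self-containedness at the price of re-proving a known theorem, while the paper's route buys brevity by delegating all of the probabilistic content to the cited lemma. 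If you do carry your version out, three points: (i) you do not need any genuine independence across BFS levels --- the chain rule together with a conditional anti-concentration bound, uniform over exploration histories, at each level $i>h_0$ already multiplies the per-level factors, and this is how the classical proof is organized, so the ``main obstacle'' you name is less severe than you fear; (ii) because $r$ may grow like $(2\log 2n)^{1/2}$, the probability that the configuration model yields a simple graph is only $e^{-\Theta(r^2)}=n^{-\Theta(1)}$, so transferring your bound to the uniform $r$-regular model costs a polynomial factor --- harmless only because the product bound over the $\Theta(\epsilon\log n/\log(r-1))$ supercritical levels is superpolynomially small, which should be stated explicitly rather than absorbed silently; (iii) the subsequent union bound in Theorem~\ref{th:reg} is over the $n$ choices of $v_2$ for a fixed $v_1$, giving $n\cdot o(n^{-3/2})=o(n^{-1/2})$, not over $\binom{n}{2}$ graph pairs as you write, since the latter would not produce a meaningful probability bound.
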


\begin{proof}
The proof follows from~\citep{bollobas1982distinguishing, feng2022powerful}. As $D_{v,G}^{h,\text{node}}$ stores exactly the same information as the node configuration used in~\citep{feng2022powerful}, therefore the proof is exactly the same as the proof of Lemma 1 in~\citep{feng2022powerful}. 

Based on Lemma~\ref{lemma:reg}, we can prove Theorem~\ref{th:reg}. Given node $v_1 \in V_1$, compare $D_{v_1,G_1}^{h,\text{node}}$ with each $D_{v_2,G_2}^{h,\text{node}}$ where $v_2 \in V_2$. The probability that $D_{v_1,G_1}^{h,\text{node}} \neq D_{v_2,G_2}^{h,\text{node}}$ for all possible $v_2 \in V_2$ is $1-o(n^{-3/2})*n = 1 - o(n^{-1/2})$. Therefore, ESC-GNN can distinguish $1-o(n^{-1/2})$ such pairs of graphs.
\end{proof}

\section{Additional Discussion on the Expressive Power of ESC-GNN}

In previous sections, we mainly discuss the expressive power enhanced by the proposed structural encoding. In this section, we shift to the integration of the global message passing layer and elucidate its substantial contribution to the expressive power. This is exemplified below through a toy example: consider graph $G$, composed of a 4-cycle and a 1-path (edge), and graph $H$, comprising a 5-path (a path with 6 nodes). When extracting 1-hop subgraphs for each node, both $G$ and $H$ yield two 1-paths and four "V graphs" (3 nodes and 2 edges). Consequently, the subgraph representation fails to distinguish between these two graphs. However, the introduction of a global message passing layer enables differentiation due to the distinct connections among these subgraphs. This distinction is applicable to both subgraph GNNs and our framework. A more comprehensive analysis can be found in~\citep{rattan2023weisfeiler}, which substantiates that the global message passing can enhance expressiveness regardless of the choice of the hop parameter.

Additionally, We conduct an ablation study on the substructure counting dataset by removing the global message passing layer from ESC-GNN. The resulting new model, denoted as "(-MP)", relies solely on aggregating the structural encoding to derive the final prediction. Given the absence of additional node attributes, the new model can be easily implemented as ESC-GNN with a single message passing layer that aggregates the edge-level structural embedding to obtain the node-level prediction. The result is presented in Table~\ref{tab:count_ab_mp}. As evident from the table, ESC-GNN consistently outperforms the new model across all benchmarks, demonstrating that the global message passing layer significantly boosts the representation power of ESC-GNN.

\begin{table*}
	\centering
	\caption{Ablation study on the proposed structural embedding (norm MAE).} 
    \label{tab:count_ab_mp}
	\scalebox{0.72}{
	\begin{tabular}{lccccccccc}
	\hline\noalign{\smallskip}
	Dataset & Tailed Triangle & Chordal Cycle & 4-Clique & 4-Path & Triangle-Rectangle & 3-cycles & 4-cycles & 5-cycles & 6-cycles \\
	\noalign{\smallskip}\hline\noalign{\smallskip}
	MPNN & 0.3631 & 0.3114 & 0.1645 & 0.1592 & 0.2979 &  0.3515 & 0.2742 & 0.2088 & 0.1555\\
 	\noalign{\smallskip}\hline\noalign{\smallskip}
	ESC-GNN & \cellcolor{yellow}0.0052 & 0.0169 & \cellcolor{yellow}0.0064 & 0.0254 & 0.0178 & \cellcolor{yellow}0.0074 &\cellcolor{yellow}0.0044 & 0.0356& 0.0337\\
    \noalign{\smallskip}\hline\noalign{\smallskip}
     (- MP) & 0.062 & 1.9355 & 0.4124& 0.0271 &0.0563 & 0.1508 & \cellcolor{yellow}0.0050 & 0.0996 &  0.0443 \\
	\hline\noalign{\smallskip}
    \end{tabular}}
\end{table*}

\section{Experimental Details}

\textbf{Stastics of Datasets.} The statistics of all used datasets are available in Table~\ref{tab:stat}.

\begin{table*}
	\centering
	\caption{Statistics of the used datasets.}
	\label{tab:stat} 
	\scalebox{1.0}{
		\begin{tabular}{lccccc}
			\hline\noalign{\smallskip}
			Dataset & Graphs & Avg Nodes & Avg Edges & Task Type & Metric\\
			\noalign{\smallskip}\hline\noalign{\smallskip}
			MUTAG & 188 & 17.9 & 19.8 & Graph Classification & ACC\\
   			PTC-MR &  349 & 14.1 & 14.5 & Graph Classification & ACC\\
			ENZYMES &  600 & 32.6 & 62.1 & Graph Classification & ACC \\
			PROTEINS & 1113 & 39.1 & 72.8 & Graph Classification & ACC \\
			IMDB-BINARY & 1000 & 19.8 & 96.5 & Graph Classification & ACC\\
			ZINC-12k & 12000 & 23.2 & 24.9 & Graph Regression & MAE\\
			ogbg-molhiv & 41127 & 25.5 & 27.5 & Graph Classification & AUC-ROC\\
            ogbg-molpcba & 437929 & 26.0 & 28.1 & Graph Classification & AP\\
            QM9 & 129433 & 18.0 & 18.6 & Graph Regression & MAE\\
            Synthetic & 5000 & 18.8 & 31.3 & Node Regression & MAE\\
			\noalign{\smallskip}
			\hline
			\noalign{\smallskip}
	\end{tabular}}
\end{table*}

\textbf{Experimental details.} The baselines and data splittings of our experiments follow from existing works~\citep{zhang2021nested, huang2023boosting} and the standard data split setting. For ESC-GNN, we adopt GIN~\citep{xu2018powerful} as the backbone GNN. In the structural embedding, we use both the shortest path distance and the resistance distance~\citep{lu2011link} as the distance feature. For the hop parameter $h$ we search between 1 to 4, and report the best results. Following existing works~\citep{huang2023boosting}, we use Adam optimizer as the optimizer, and use plateau scheduler with patience 10 and decay factor 0.9. On most datasets, the learning rate is set to 0.001, and the hidden embedding dimension is set to 300. The training epoch is set to 2000 for counting substructures, 400 for QM9, 1000 for ZINC, and 150 for the OGB dataset. Most of the experiments are implemented with two Intel Core i9-7960X processors and 2 NVIDIA 3090 graphics cards. Others (e.g., experiments on the TU dataset) are implemented with two Intel Xeon Gold 5218 processors and 10 NVIDIA 2080TI graphics cards. 

\section{Evaluation on Real-World Datasets}
\label{subsec:real}

\textbf{Molecule Dataset.} We evaluate ESC-GNN on various popular real-world molecule datasets, including ZINC~\citep{dwivedi2020benchmarking} and the OGB dataset~\citep{hu2020open}. ZINC is a dataset of chemical compounds, and the task is graph regression. For the OGB dataset, we use ogbg-molhiv and ogbg-molpcba for evaluation. ogbg-molhiv contains 41K molecules with 2 classes, and ogbg-molpcba contains 438 molecules with 128 classes. The task is to predict to which these molecules belong. We follow the standard evaluation metric and the dataset split, and report the result in Table~\ref{tab:mol}.


\begin{table}
\centering
\caption{Evalation on ZINC and OGB datasets.} 
\label{tab:mol}
\scalebox{0.8}{
\begin{tabular}{lccc}
\hline\noalign{\smallskip}
Dataset & OGBG-HIV (AUCROC) & OGBG-PCBA (AP) & ZINC \\
\noalign{\smallskip}\hline\noalign{\smallskip}
GIN~\citep{xu2018powerful} & 77.07$\pm$1.49 & 27.03$\pm$0.23 & 0.163 \\
PNA~\citep{corso2020principal} & 79.05$\pm$1.32 & 28.38$\pm$0.35 & 0.188\\
GSN~\citep{bouritsas2022improving} & 77.99$\pm$0.01 & - & 0.115\\
DGN~\citep{beaini2021directional} & 79.70$\pm$0.97 & 28.85$\pm$0.30 & 0.168 \\
CIN~\citep{bodnar2021weisfeiler1} & \textbf{80.94$\pm$0.57} & - & \textbf{0.079} \\
NGNN~\citep{zhang2021nested} & 78.34$\pm$1.86 & 28.32$\pm$0.41 & 0.111\\
GIN-AK+~\citep{zhao2022stars} & 79.61$\pm$1.19 & \textbf{29.30$\pm$0.44} & 0.080\\
OSAN~\citep{qianordered2022} & - & - & 0.126\\ 
SUN~\citep{frascaunderstanding2022} & 80.03$\pm$0.55 &  - & 0.083 \\
DSS-GNN~\citep{bevilacqua2022equivariant} & 76.78$\pm$1.66 & - & 0.102 \\
I$^2$-GNN~\citep{huang2023boosting} & 78.68$\pm$0.93 & - & 0.083 \\
Graph Transformer~\citep{rampavsek2022recipe} & 77.40$\pm$1.77 & 27.51$\pm$0.28 & 0.113 \\
\noalign{\smallskip}\hline\noalign{\smallskip}
ESC-GNN & 78.62$\pm$1.06 & 28.16$\pm$0.31 & 0.075$\pm$0.002\\
\noalign{\smallskip}\hline\noalign{\smallskip}
\end{tabular}}
\end{table}

\textbf{TU datasets.} We evaluate the performance of ESC-GNN on the TU datasets~\citep{morris2020tudataset}. The experimental settings follow~\citep{zhang2021nested} for more consistent evaluation standards. Specifically, we uniformly use the 10-fold cross validation framework, with the split ratio of training/validation/test set 0.8/0.1/0.1. The results are available in Table~\ref{tab:tu}. In the table, ESC-GNN (h2) and ESC-GNN (h3) denote ESC-GNN with the hop parameters setting to 2 and 3. 

\subsection{Results} 

\textbf{Comparison with backbone models.} On OGBG-MolHIV, OGBG-MolPCBA, and the TU dataset, we use GIN~\citep{xu2018powerful} as the backbone model. While on ZINC, we use a plain graph transformer without positional encodings from~\citep{rampavsek2022recipe} as the backbone model. The experimental results compared with these backbones demonstrate that the proposed structural information not only enhances the theoretical representation power but also improves the empirical representation power.

\textbf{Comparison with subgraph GNNs.} When compared to subgraph GNNs rooted at 2-tuples, such as I$^2$-GNN, ESC-GNN performs slightly worse or comparably on these benchmark tasks. This empirical observation provides evidence that the proposed structural embedding effectively captures valuable information from subgraph GNNs, benefiting downstream tasks. It is worth noting that I$^2$-GNN generally outperforms node-based subgraph GNNs like NGNN and GIN-AK+, suggesting that the theoretical advantages gained from subgraph selection and aggregation policies contribute to the empirical representation power. Furthermore, among the subgraph GNNs, those that introduce interactions between subgraphs, such as SUN, GIN-AK+, and OSAN, tend to perform better compared to subgraph GNNs that do not incorporate such interactions, like NGNN and ID-GNN. This observation indicates that subgraph interaction also plays a role in enhancing the representation power of subgraph GNNs. Exploring additional techniques to incorporate this interaction can be considered for future research.

\textbf{Comparison with GNNs that incorporate the substructure information.} GSN~\citep{bouritsas2022improving} incorporates the number of specific substructures as augmented features. In contrast to GSN, our framework does not explicitly encode the number of substructures. Instead, we encode the more general distance information of subgraphs, which enables plain GNNs to count many substructures without the need to manually determine which substructures to include. Regarding the real-world dataset, GSN achieves a MAE score of 0.115 on ZINC, whereas ESC-GNN attains a MAE score of 0.096. Similarly, utilizing the same backbone (GIN), GSN records an AUCROC score of 77.99 on OGBG-HIV, whereas ESC-GNN demonstrates a higher AUCROC score of 78.62. These outcomes underscore the superiority of our proposed model.

\begin{table}
\centering
\caption{Experiments on TU, Accuracy as the evaluation metric.} 
\label{tab:tu}
\begin{tabular}{lcccccccc}
\hline\noalign{\smallskip}
Dataset & MUTAG & PTC-MR & PROTEINS & ENZYMES & IMDB-B \\
\noalign{\smallskip}\hline\noalign{\smallskip}
GIN & 84.5$\pm$8.9 & 51.2$\pm$9.2 & 70.6$\pm$4.3 & 38.3$\pm$6.4 & 73.3$\pm$4.7 \\
PPGN & 84.7$\pm$8.2 & 55.0$\pm$6.4 & 74.8$\pm$3.3 & 55.0$\pm$6.4 & 71.5$\pm$5.4\\
NGNN & 87.9$\pm$8.2 & 54.1$\pm$7.7 & 73.9$\pm$5.1 & 29.0$\pm$8.0 & 73.1$\pm$5.7 \\
GIN-AK+ & \textbf{88.8$\pm$4.0} & 60.5$\pm$8.0 & 75.5$\pm$4.4 & \textbf{58.9$\pm$6.2} & 72.4$\pm$3.7\\ 
SUN & 86.1$\pm$6.0 & 60.2$\pm$7.2 & 72.1$\pm$3.8 & 16.7$\pm$0.0 & \textbf{73.7$\pm$2.9}\\
I$^2$-GNN & 87.9$\pm$4.3 & \textbf{61.4$\pm$8.7} & 74.8$\pm$2.9 & 40.3$\pm$6.7 & 73.6$\pm$4.0\\
\noalign{\smallskip}\hline\noalign{\smallskip}
ESC-GNN (h2) & 86.2$\pm$7.9 & 52.9$\pm$6.4 & 73.3$\pm$4.1 & 53.2$\pm$8.1 & 72.0$\pm$6.0\\
ESC-GNN (h3) & 85.6$\pm$7.9 & 56.4$\pm$6.9 & \textbf{76.0$\pm$4.5} & 43.3$\pm$6.0 & \textbf{73.7$\pm$4.8}\\
\noalign{\smallskip}\hline\noalign{\smallskip}
\end{tabular}
\end{table}

\section{Ablation Study}
\label{subsec:ablation}
We evaluate the effectiveness of each part of the proposed structural embedding on the substructure counting dataset. For the proposed three types of structural embedding (the degree encoding, the node-level distance encoding, and the edge-level distance encoding), we delete one of them from the original structural embedding every time and report the results in Table~\ref{tab:count_ab}. We observe that after removing the three types of embedding, ESC-GNN performs worse compared with its original version, especially after removing the edge-level distance encoding. This is consistent with our theoretical results: in Theorem~\ref{th:sub}, we show that the proposed structural embedding contains key information for the counting power of subgraph GNNs; in Theorem~\ref{th:count} and Theorem~\ref{th:count_ind}, we show that the edge-level distance information directly encodes the number of certain types of substructures.

\begin{table*}
	\centering
	\caption{Ablation study on the proposed structural embedding (norm MAE).} 
    \label{tab:count_ab}
	\scalebox{0.72}{
	\begin{tabular}{lccccccccc}
	\hline\noalign{\smallskip}
	Dataset & Tailed Triangle & Chordal Cycle & 4-Clique & 4-Path & Triangle-Rectangle & 3-cycles & 4-cycles & 5-cycles & 6-cycles \\
	\noalign{\smallskip}\hline\noalign{\smallskip}
	MPNN & 0.3631 & 0.3114 & 0.1645 & 0.1592 & 0.2979 &  0.3515 & 0.2742 & 0.2088 & 0.1555\\
 	\noalign{\smallskip}\hline\noalign{\smallskip}
	ESC-GNN & \cellcolor{yellow}0.0052 & 0.0169 & \cellcolor{yellow}0.0064 & 0.0254 & 0.0178 & \cellcolor{yellow}0.0074 &\cellcolor{yellow}0.0044 & 0.0356& 0.0337\\
    \noalign{\smallskip}\hline\noalign{\smallskip}
     (- degree) &0.0121 & 0.0492& 0.0106& 0.0322 & 0.0349 &0.0342 &0.0144 &0.0513 & 0.0652\\
    (- node-level dist) & 0.0382 & 0.0344 & 0.0222 &0.0428 & 0.0256 & 0.0157 & 0.0261 & 0.0492 &0.0608  \\
    (- edge-level dist) & 0.0208&0.2811 &0.0497 & 0.0584&0.185 & 0.2617&0.2244 & 0.1654& 0.1364\\
	\hline\noalign{\smallskip}
    \end{tabular}}
\end{table*}

\section{Evaluation on the Space Cost}

With regards to the preprocessing time, our approach's preprocessing time is comparable to that of I$^2$-GNN as shown in Table 3 in the main paper since we use their code to extract subgraph information. Although we require a small amount of extra time to extract and preprocess the structural embeddings, we believe that this is a reasonable trade-off since the actual model running time is reduced by orders. When compared to the total running time of subgraph GNNs, the preprocessing time is negligible. Therefore, our ESC-GNN's total running time is less than 1\% of that of I$^2$-GNN on ogbg-hiv.


As for the storage of structural embeddings, we only need to store a vector of integer indices for each structural embedding, as illustrated in Figure 1 in the main paper, without really storing any dense high-dimensional vectors. Therefore, the storage is still manageable. Specifically, when feeding the indices to the backbone model, we use a learnable matrix (which is a model parameter) to transform the integer index vector into dense embeddings. For example, to extract the degree information of the first subgraph in Figure 1 in the main paper, we use a learnable matrix $W \in \mathbb{R}^{4\times h}$ and compute the degree information with $W * [0;0;4;0]$, where $*$ denotes sparse matrix multiplication (which is very fast for sparse matrices), and the sparse vector $[0;0;4;0]$ is what we need to store. This approach requires relatively small storage space.

We have also included the space cost of our approach in Table~\ref{tab:space}. In these datasets, ESC-GNN requires much less storage space than I$^2$-GNN while slightly more space than NGNN. These results demonstrate that our approach for storing structural embeddings offers excellent storage performance.

\begin{table*}
	\centering
	\caption{Evaluation on the space cost.}
	\label{tab:space} 
	\scalebox{1.0}{
		\begin{tabular}{lccccc}
			\hline\noalign{\smallskip}
			Dataset & OGBG-HIV &ZINC & QM9\\
			\noalign{\smallskip}\hline\noalign{\smallskip}
			Original & 159MB & 16.2MB & 281MB\\
   			NGNN & 2.32GB & 218MB & 2.90GB\\
			I$^2$-GNN & 5.95GB & 627MB & 8.25GB \\
			ESC-GNN & 2.57GB & 427MB & 4.46GB\\

			\noalign{\smallskip}
			\hline
			\noalign{\smallskip}
	\end{tabular}}
\end{table*}

\section{Counting Substructures on Other Datasets}

In order to evaluate the counting power of the proposed model, we conducted experiments on the ZINC dataset by generating the cycle counting task and reporting the MAE result in Table~\ref{tab:cnt_zinc}. To count cycles, we used the simple-cycle function from \url{https://networkx.org/documentation/stable/reference/algorithms/generated/networkx.algorithms.cycles.simple_cycles.html}. Surprisingly, we found that all methods achieved much better MAE scores than those reported in Table 1 of the main paper. This may be due to the fact that many graphs in ZINC contain few cycles, resulting in a small MAE value.

In general, the reported results are consistent with Table 1 of the main paper. ESC-GNN performs better in counting 3-cycles and 4-cycles, and performs worse on 5-cycles and 6-cycles. This observation is consistent with Theorem~\ref{th:count}. Furthermore, it outperforms MPNN, performs comparably to GIN-AK+, and is outperformed by I$^2$-GNN, which is consistent with our theoretical results, showing that the proposed structural embedding can extract valuable information from the subgraph GNNs.

\begin{table*}
	\centering
	\caption{Evaluation on Counting Substructures on ZINC (norm MAE).}
	\label{tab:cnt_zinc} 
	\scalebox{1.0}{
		\begin{tabular}{lccccc}
			\hline\noalign{\smallskip}
			Tasks & 3-cycle & 4-cycle & 5-cycle & 6-cycle \\

			\noalign{\smallskip}\hline\noalign{\smallskip}
			GNN & 0.0016 & 0.0030 & 0.0394 & 0.1442\\
   			GIN-AK+ & 0.0009 & 0.0064 & 0.0036 & 0.0057\\
			NGNN & 	0.0002 &	0.0001	& 0.0007 & 0.0012 \\
            I$^2$-GNN &0.0003 & 0.0001 & 0.0003 & 0.0007 \\

			ESC-GNN & 0.0008	&0.0004	&0.0058 &	0.0047\\

			\noalign{\smallskip}
			\hline
			\noalign{\smallskip}
	\end{tabular}}
\end{table*}

\section{Limitations and the Assets We Used}

\textbf{Limitations of the paper.} First, we have shown that the representation power of our model is bounded by 4-WLs and subgraph GNNs rooted on 2-tuples in terms of distinguishing non-isomorphic graphs and counting substructures.

Second, the proposed model may not reach a satisfying performance on benchmarks where the encoded substructures are of no use. Also, the proposed model may not suit high-order graphs where the neighbors of the nodes and edges are defined differently from the simple graphs. 

\textbf{The assets we used.} Our model is experimented on benchmarks from~\citep{dwivedi2020benchmarking, hu2020open, zhao2022stars, morris2020tudataset, abboud2021surprising, balcilar2021breaking, murphy2019relational, ramakrishnan2014quantum, wu2018moleculenet} under the MIT license.





\nocite{langley00}

\bibliography{example_paper}
\bibliographystyle{iclr2024_conference}